\newtheorem{remark}{Remark}
\newtheorem{prop}{Proposition}
\newtheorem{corollary}{Corollary}
\newtheorem{definition}{Definition}
\newtheorem{lemma}{Lemma}
\newtheorem{theorem}{Theorem}
\newtheorem{example}{Example}
\newcommand{\E}{{\mathbb E}}
\newcommand{\RR}{\mathbb R}
\newcommand{\NN}{\mathbb N}
\def\si{\sigma}
\numberwithin{equation}{section}
\def\bz{\mathbf z}
\def\Var{{\bf var}}
\title{Learning Theory Approach to Minimum Error Entropy Criterion$^\dag$\footnotetext{\dag
~The work described in this paper is supported by National Science Foundation of China under Grant 11201348 and by a grant
from the Research Grants Council of Hong Kong [Project No. CityU 103709]. Ting Hu (tinghu@whu.edu.cn) is with School of
Mathematics and Statistics, Wuhan University, Wuhan 430072, China. Jun Fan (junfan2@student.cityu.edu.hk) and Ding-Xuan
Zhou (mazhou@cityu.edu.hk) are with Department of Mathematics, City University of Hong Kong, Kowloon, Hong Kong, China.
Qiang Wu (wuqiangmath@gmail.com) is with Department of Mathematical Sciences, Middle Tennessee State University, Box 34
Murfreesboro, TN 37132-0001, USA.}}
\author{Ting Hu, Jun Fan, Qiang Wu, and Ding-Xuan Zhou}
\date{}
\begin{document}
\maketitle

\begin{abstract}
We consider the minimum error entropy (MEE) criterion and an empirical risk minimization learning algorithm when an
approximation of R\'enyi's entropy (of order $2$) by Parzen windowing is minimized. This learning algorithm involves a
Parzen windowing scaling parameter. We present a learning theory approach for this MEE algorithm in a regression setting
when the scaling parameter is large. Consistency and explicit convergence rates are provided in terms of the approximation
ability and capacity of the involved hypothesis space. Novel analysis is carried out for the generalization error
associated with R\'enyi's entropy and a Parzen windowing function, to overcome technical difficulties arising from the
essential differences between the classical least squares problems and the MEE setting. An involved symmetrized least
squares error is introduced and analyzed, which is related to some ranking algorithms.
\end{abstract}

{\bf Keywords}: minimum error entropy, learning theory, R\'enyi's entropy, empirical risk minimization, approximation error

\section{Introduction}

Information theoretical learning is inspired by introducing
information theory into a machine learning paradigm. Within this
framework algorithms have been developed for several learning tasks,
including regression, classification, and unsupervised learning. It
attracts more and more attention because of its successful
applications in signal processing, system engineering, and data
mining. A systematic treatment and recent development of this area
can be found in \cite{MEEbook} and references therein.

Minimum error entropy (MEE) is a principle of information
theoretical learning and provides a family of supervised learning
algorithms. It was introduced for adaptive system training in
\cite{ErdPri02} and has been applied to blind source separation,
maximally informative subspace projections, clustering, feature
selection, blind deconvolution, and some other topics
\cite{ErdPri03, MEEbook, Silva}. The idea of MEE is to extract
from data as much information as possible about the data
generating systems by minimizing error entropies in various ways.
In information theory, entropies are used to measure average
information quantitatively. For a random variable $E$ with
probability density function \label{pdfE} $p_E$, Shannon's entropy
of $E$ \label{Shannon entropy} is defined as
$$H_S(E) = -\E[\log p_E]= -\int p_E(e) \log p_E(e) d e $$
while R\'enyi's entropy \label{Renyi entropy} of order $\alpha$
($\alpha >0$ but $\alpha \not=1$) is defined as
$$H_{R, \alpha} (E)= \frac{1}{1-\alpha} \log \E[p_E^{\alpha-1}] = \frac{1}{1-\alpha} \log\left(\int (p_E(e))^{\alpha } d e\right) $$
satisfying $\lim_{\alpha \to 1} H_{R, \alpha} (E) = H_S(E)$. In
supervised learning our target is to predict the response variable
\label{response variable} $Y$ from the explanatory variable
\label{explanatory variable} $X$. Then the random variable $E$
becomes the error variable $E=Y-f(X)$ \label{random variable} when
a predictor $f(X)$ is used and the MEE principle aims at searching
for a predictor $f(X)$ that contains the most information of the
response variable by minimizing information entropies of the error
variable $E=Y-f(X)$. This principle is a substitution of the
classical least squares method when the noise is non-Gaussian.
Note that $\E [Y-f(X)]^2 = \int e^2 p_E(e) d e$. The least squares
method minimizes the variance of the error variable $E$ and is
perfect to deal with problems involving Gaussian noise (such as
some from linear signal processing). But it only puts the first
two moments into consideration, and does not work very well for
problems involving heavy tailed non-Gaussian noise. For such
problems, MEE might still perform very well in principle since
moments of all orders of the error variable are taken into account
by entropies. Here we only consider R\'enyi's entropy \label{Renyi
entropy 2} of order $\alpha =2$: $H_{R}(E) = H_{R, 2}(E) =- \log
\int (p_E(e))^2 d e$. Our analysis does not apply to R\'enyi's
entropy of order $\alpha\not= 2$.

 In most real applications, neither the explanatory variable $X$ nor the response variable $Y$ is explicitly known.
 Instead, in supervised learning, a sample \label{sample} $\bz=\{(x_i, y_i)\}_{i=1}^{m}$ is available which reflects the distribution
 of the explanatory variable $X$ and the functional relation between $X$ and the response variable $Y$. With this sample, information entropies
 of the error variable $E=Y-f(X)$ can be approximated by estimating its probability density function $p_E$
 by Parzen \cite{Parzen} windowing \label{Parzen windowing} $\widehat{p}_E (e) =\frac{1}{m h} \sum_{i=1}^m G(\frac{(e-e_i)^2}{2 h^2})$,
 where $e_i=y_i-f(x_i)$, $h>0$ is an \label{MEE scaling parameter} MEE scaling parameter, and $G$ is a \label{windowing function} windowing function.
 A typical choice for the windowing function $G(t) =\exp \{-t\}$ corresponds to Gaussian windowing.
Then approximations of Shannon's entropy and R\'enyi's entropy of
order $2$ are given by their empirical versions $-\frac{1}{m}
\sum_{i=1}^m \log \widehat{p}_E (e_i)$ and $-\log (\frac{1}{m}
\sum_{i=1}^m \widehat{p}_E (e_i))$ as \label{empirical Shannon
entropy}
$$
\widehat{H_S} = -  \frac{1}{m}\sum_{i=1}^m \log\left[\frac 1{m h} \sum_{j=1}^m G\left(\frac{(e_i-e_j)^2}{2 h^2}\right)\right]
$$
and \label{empirical Renyi entropy}
$$
\widehat{H_R} = - \log \frac{1}{m^2 h} \sum_{i=1}^m \sum_{j=1}^m G\left(\frac{(e_i-e_j)^2}{2 h^2}\right),
$$
respectively.
The empirical MEE is implemented by minimizing these computable quantities.

Though the MEE principle has been proposed for a decade and MEE algorithms have been shown to be effective in various
applications, its theoretical foundation for mathematical error analysis is not well understood yet. There is even no
consistency result in the literature. It has been observed in applications that the scaling parameter $h$ should be large
enough for MEE algorithms to work well before smaller values are tuned. However, it is well known that the convergence of
Parzen windowing requires $h$ to converge to $0.$ We believe this contradiction imposes difficulty for rigorous
mathematical analysis of MEE algorithms. Another technical barrier for mathematical analysis of MEE algorithms for
regression is the possibility that the regression function may not be a minimizer of the associated generalization error,
as described in detail in Section \ref{DiffNovelty} below. The main contribution of this paper is a consistency result for
an MEE algorithm for regression. It does require $h$ to be large and explains the effectiveness of the MEE principle in
applications.

In the sequel of this paper, we consider an MEE learning algorithm
that minimizes the empirical R\'enyi's entropy $\widehat{H_R}$ and
focus on the regression problem. We will take a learning theory
approach and analyze this algorithm in an {\it empirical risk
minimization} (ERM) setting. Assume $\rho$ is a probability
measure on ${\mathcal Z}:={\mathcal X} \times {\mathcal Y}$, where
${\mathcal X}$ is a separable metric space (input space for
learning) and ${\mathcal Y}=\RR$ (output space). Let $\rho_X$ be
its marginal distribution on ${\mathcal X}$ (for the explanatory
variable $X$) and $\rho(\cdot|x)$ be the conditional distribution
of $Y$ for given $X=x$. The sample $\bz$ is assumed to be drawn
from $\rho$ independently and identically distributed. The aim of
the regression problem is to predict the conditional mean of $Y$
for given $X$ by learning the regression function
\label{regression function} defined by
$$ f_\rho(x)=\E(Y|X=x)=\int_{{\mathcal X}} yd\rho(y|x), \qquad x\in {\mathcal X}.$$

The minimization of empirical R\'enyi's entropy cannot be done over all possible measurable functions which would lead to
overfitting. A suitable hypothesis space should be chosen appropriately in the ERM setting. The ERM framework for MEE
learning is defined as follows. Recall $e_i =y_i - f(x_i)$.

\begin{definition}
Let $G$ be a continuous function \label{windowing function MEE}
defined on $[0, \infty)$ and \label{MEE scaling parameter 2}
$h>0$. Let ${\mathcal H}$ be a compact subset of $C({\mathcal
X})$. Then the MEE learning algorithm associated with ${\mathcal
H}$ is defined by \label{MEE algorithm}
\begin{equation}\label{Etf}
f_{\bf z} = \arg \min_{f\in {\mathcal H}} \left\{- \log \frac{1}{m^2 h} \sum_{i=1}^m \sum_{j=1}^m
G\left(\frac{\left[\left(y_i - f(x_i)\right) -\left(y_j - f(x_j)\right)\right]^2}{2 h^2}\right)\right\}.
\end{equation}
\end{definition}

The set ${\mathcal H}$ is called the hypothesis space
\label{hypothesis space} for learning. Its compactness ensures the
existence of a minimizer $f_{\bf z}$. Computational methods for
solving optimization problem (\ref{Etf}) and its applications in
signal processing have been described in a vast MEE literature
\cite{MEEbook, ErdPri02, ErdPri03, Silva}. For different purposes
the MEE scaling parameter $h$ may be chosen to be large or small.
It has been observed empirically that the MEE criterion has nice
convergence properties when the MEE scaling parameter $h$ is
large. The main purpose of this paper is to verify this
observation in the ERM setting and show that $f_{\bf z}$ with a
suitable constant adjustment approximates the regression function
well with confidence. Note that the requirement of a constant
adjustment is natural because any translate $f_{\bf z} +c$ of a
solution $f_{\bf z}$ to (\ref{Etf}) with a constant $c\in \RR$ is
another solution to (\ref{Etf}). So our consistency result for MEE
algorithm (\ref{Etf}) will be stated in terms of the variance
$\Var[f_{\bf z}(X) -f_\rho(X)]$ of the error function $f_{\bf z}
-f_\rho$. Here we use $\Var$ to denote the variance of a random
variable. \label{variance}

\section{Main Results on Consistency and Convergence
Rates}\label{mainresults}

Throughout the paper, we assume $h \geq 1$ and that \label{power
q}
\begin{equation}\label{assummoment}
\E[|Y|^q]<\infty \ \hbox{for some} \ q>2, \ \hbox{and} \ f_\rho
\in L^\infty_{\rho_X}. \quad \hbox{Denote} \ q^* =\min\{q-2, 2\}.
\end{equation}
We also assume that the windowing function $G$ \label{windowing
function constant} satisfies \label{decay constant}
\begin{equation}\label{assumpG}
G \in C^2 [0, \infty), \ G_+'
(0) =-1, \ \hbox{and} \ C_G:= \sup_{t\in (0, \infty)} \left\{|(1
+t) G' (t)| + |(1 +t) G'' (t)|\right\}<\infty.
\end{equation}

The special example $G(t) = \exp\{-t\}$ for the Gaussian windowing
satisfies (\ref{assumpG}).

Consistency analysis for regression algorithms is often carried out
in the literature under a decay assumption for $Y$ such as uniform
boundedness and exponential decays. A recent study \cite{AC} was
made under the assumption $\E[|Y|^4]<\infty$. Our assumption
(\ref{assummoment}) is weaker since $q$ may be arbitrarily close to
$2$. Note that (\ref{assummoment}) obviously holds when $|Y| \leq M$
almost surely for some constant $M>0$, in which case we shall denote
$q^* =2$.

Our consistency result, to be proved in Section \ref{mainproof},
asserts that when $h$ and $m$ are large enough, the error
$\Var[f_{\bf z}(X) -f_\rho(X)]$ of MEE algorithm (\ref{Etf}) can
be arbitrarily close to the approximation error \cite{SZappr} of
the hypothesis space ${\mathcal H}$ with respect to the regression
function $f_\rho$.

\begin{definition}
The approximation error \label{approximation error} of the pair
$({\mathcal H}, \rho)$ is defined by
\begin{equation}\label{approxerror}
{\mathcal D}_{\mathcal H} (f_\rho) = \inf_{f \in {\mathcal
H}}\Var[f(X)-f_\rho(X)].
\end{equation}
\end{definition}

\begin{theorem}\label{consistthm}
Under assumptions (\ref{assummoment}) and (\ref{assumpG}), for any
$0< \epsilon \leq 1$ and $0 < \delta <1$, there exist
$h_{\epsilon, \delta} \geq 1$ and $m_{\epsilon, \delta}(h) \geq 1$
both depending on ${\mathcal H}, G, \rho, \epsilon, \delta$ such
that for $h \geq h_{\epsilon, \delta}$ and $m \geq m_{\epsilon,
\delta}(h)$, with confidence $1-\delta$, we have
\begin{equation}\label{consistbd}
\Var[f_{\bf z}(X) -f_\rho(X)] \leq {\mathcal D}_{\mathcal H}
(f_\rho) + \epsilon.
\end{equation}
\end{theorem}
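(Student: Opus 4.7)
\textbf{Proof proposal for Theorem \ref{consistthm}.}

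The plan is to relate the MEE objective to the ``symmetrized least squares'' surrogate $\mathcal{V}(f) := \Var[Y-f(X)]$. Writing $W = Y-f(X)$ and $W' = Y'-f(X')$ for an independent copy, one has $\E[(W-W')^2] = 2\mathcal{V}(f)$, and the orthogonality identity $\Var[Y-f(X)] = \Var[Y-f_\rho(X)] + \Var[f_\rho(X)-f(X)]$ (valid since $\E[(Y-f_\rho(X))\mid X]=0$) shows that
\[
\mathcal{V}(f_{\bf z}) - \inf_{f\in\h}\mathcal{V}(f) = \Var[f_{\bf z}(X) - f_\rho(X)] - \mathcal{D}_\h(f_\rho).
\]
So (\ref{consistbd}) is equivalent to the excess-variance bound $\mathcal{V}(f_{\bf z})-\inf_\h\mathcal{V}\leq\epsilon$. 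Let $\mathcal{E}_h(f) := -\log\bigl(\frac{1}{h}\E[G((W-W')^2/(2h^2))]\bigr)$ be the population MEE risk and $\mathcal{E}_{h,{\bf z}}(f)$ its empirical counterpart, so that $f_{\bf z}=\arg\min_{\h}\mathcal{E}_{h,{\bf z}}$.

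First, I would carry out a large-$h$ Taylor expansion of $G$ at $0$. Using $G_+'(0)=-1$ and the bound $|(1+t)G''(t)|\leq C_G$ from (\ref{assumpG}), one gets $G(t)=G(0)-t+\text{(remainder)}$ with a remainder controlled by the fourth moment of $W-W'$; assumption (\ref{assummoment}) provides this moment after truncating $Y$ at a level that depends on $h$. Combining with the expansion $-\log(G(0)-u)=-\log G(0)+u/G(0)+O(u^2)$ (legitimate once $\E[G(\cdot)]\geq G(0)/2$, which holds uniformly on $\h$ for large $h$) yields
\[
\mathcal{E}_h(f) = -\log\!\bigl(G(0)/h\bigr) + \frac{\mathcal{V}(f)}{h^2 G(0)} + r_h(f),\qquad \sup_{f\in\h}|r_h(f)| = o(h^{-2})
\]
as $h\to\infty$, with a quantitative rate governed by $q^*$. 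Thus, up to the affine map $u\mapsto -\log(G(0)/h)+u/(h^2G(0))$, the population MEE risk is the variance functional $\mathcal{V}$ plus a small residual.

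Next I would run the standard ERM error decomposition: letting $f_\h^*\in\arg\min_\h \mathcal{V}$, the minimization property of $f_{\bf z}$ together with the previous step gives
\[
\mathcal{V}(f_{\bf z})-\mathcal{V}(f_\h^*) \;\leq\; 2 h^2 G(0)\sup_{f\in\h}\bigl|\mathcal{E}_h(f)-\mathcal{E}_{h,{\bf z}}(f)\bigr| \;+\; h^2 G(0)\bigl(|r_h(f_{\bf z})|+|r_h(f_\h^*)|\bigr).
\]
The second term is $o(1)$ by the Taylor bound. For the uniform-deviation term I would cover $\h$ in the $C(\X)$ norm (finite covering numbers follow from its compactness), control fluctuations on each ball by a Bernstein- or Hoeffding-type inequality for the $V$-statistic $\frac{1}{m^2h}\sum_{i,j}G((e_i-e_j)^2/(2h^2))$, and combine with truncation of $Y$ to exploit $\E|Y|^q<\infty$; this produces the exponent $q^*$ and a bound that tends to $0$ for each fixed $h$ as $m\to\infty$. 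The proof then concludes by first choosing $h_{\epsilon,\delta}$ so that the Taylor residual term is at most $\epsilon/2$, and then $m_{\epsilon,\delta}(h)$ so that with confidence $1-\delta$ the concentration term is at most $\epsilon/2$.

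The main obstacle is the nonlinear interaction of the outer logarithm with the pairwise sum. One must argue uniformly in $f\in\h$ that $\frac{1}{m^2h}\sum_{i,j}G(\cdot)$ stays bounded away from zero so that linearizing $-\log$ is legitimate, and the pairwise structure turns the empirical objective into a $V$-statistic whose concentration needs decoupling rather than a bare i.i.d.\ Bernstein estimate. Compounding this, the weak moment hypothesis (only $\E|Y|^q<\infty$ for some $q>2$) forces a truncation at a level tied to $h$, and the Taylor remainder involves $(W-W')^4$, which is only borderline integrable; balancing the truncation level against the Taylor error while keeping the uniform deviation small over $\h$ is the delicate technical step and is precisely where the exponent $q^*=\min\{q-2,2\}$ enters.
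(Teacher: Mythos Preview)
Your overall strategy---Taylor-expand $G$ to reduce the MEE objective to the variance functional, then run an ERM decomposition with a uniform concentration bound over $\mathcal H$---matches the paper's, and the orthogonality identity you state is exactly Lemma~\ref{ident}. However, the paper takes a cleaner route in two places where you introduce unnecessary complications.

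First, the paper never linearizes the logarithm. Since $-\log$ is strictly decreasing, the minimizer $f_{\bf z}$ of (\ref{Etf}) equals the minimizer of $-\frac{h^2}{m(m-1)}\sum_{i\neq j}G\bigl(\tfrac{(e_i-e_j)^2}{2h^2}\bigr)$ (the diagonal terms are constant). The paper therefore works directly with the functional $\mathcal E^{(h)}(f)=-h^2\,\E\bigl[G\bigl(\tfrac{(W-W')^2}{2h^2}\bigr)\bigr]$, whose Taylor expansion gives $\mathcal E^{(h)}(f)+h^2G(0)-C_\rho=\Var[f(X)-f_\rho(X)]+O(h^{-q^*})$ (Lemma~\ref{analysisG} and Theorem~\ref{compare}). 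This eliminates entirely the ``main obstacle'' you identify: there is no need to keep $\frac{1}{m^2h}\sum G(\cdot)$ bounded away from zero, and the concentration step becomes a U-statistic deviation bound (Hoeffding's inequality, Lemma~\ref{Hoeffdinglemma}) rather than a V-statistic inside a logarithm.

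Second, the paper avoids truncating $Y$. Instead of bounding the second-order Taylor remainder by $Ct^2$ (which would produce the possibly non-integrable $(W-W')^4$ you worry about), it uses the interpolated estimate
\[
|G(t)-G(0)+t|\;\leq\; C\,t^{(q^*+2)/2},\qquad t\geq 0,
\]
obtained by using $t^2$ for $t\leq 1$ and the first-order bound $2\|G'\|_\infty t$ for $t>1$. Substituting $t=(W-W')^2/(2h^2)$ yields a remainder controlled by $\E|W-W'|^{q^*+2}=\E|W-W'|^{\min\{q,4\}}$, which is finite under (\ref{assummoment}); this is precisely where $q^*$ enters, without any truncation level to balance. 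Your truncation scheme would also work for consistency, but it is more laborious and would not give the clean $h^{-q^*}$ rate used later in Theorem~\ref{mainresult}.
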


Our convergence rates will be stated in terms of the approximation
error and the capacity of the hypothesis space ${\mathcal H}$
measured by covering numbers in this paper.

\begin{definition}
For $\varepsilon>0,$ the \label{covering number} {\it covering
number} ${\mathcal N}\left({\mathcal H}, \varepsilon\right)$ is
defined to be the smallest integer $l\in \mathbb{N}$ such that
there exist $l$ disks in $C({\mathcal X})$ with radius
$\varepsilon$ and centers in ${\mathcal H}$ covering the set
${\mathcal H}.$ We shall assume that for some constants $p>0$
\label{covering index} and $A_p
>0$, there holds
\begin{equation}\label{covercond} \log {\mathcal N}\left({\mathcal H},
\varepsilon\right) \leq A_{p}\varepsilon^{-p}, \qquad \forall
\varepsilon >0.
\end{equation}
\end{definition}

The behavior (\ref{covercond}) of the covering numbers is typical in learning theory. It is satisfied by balls of Sobolev
spaces on ${\mathcal X} \subset \RR^n$ and reproducing kernel Hilbert spaces associated with Sobolev smooth kernels. See
\cite{AB, Zhoucov, Zhoucap, Yao}. We remark that empirical covering numbers might be used together with concentration
inequalities to provide shaper error estimates. This is however beyond our scope and for simplicity we adopt the the
covering number in $C({\mathcal X})$ throughout this paper.

The following convergence rates for (\ref{Etf}) with large $h$
will be proved in Section \ref{mainproof}.

\begin{theorem}\label{mainresult}
Assume (\ref{assummoment}), (\ref{assumpG}) and covering number condition (\ref{covercond}) for some $p>0$. Then for any
$0< \eta \leq 1$ and $0 < \delta <1$, with confidence $1-\delta$ we have
\begin{equation}\label{mainbd}
\Var[f_{\bf z}(X) -f_\rho(X)] \leq \widetilde{C}_{\mathcal H}
\eta^{(2-q)/2} \left(h^{-\min\{q-2, 2\}} + h
m^{-\frac{1}{1+p}}\right) \log \frac{2}{\delta} + (1 + \eta)
{\mathcal D}_{\mathcal H} (f_\rho).
\end{equation}
If $|Y| \leq M$ almost surely for some $M>0$, then with confidence
$1-\delta$ we have
\begin{equation}\label{mainbdM}
\Var[f_{\bf z}(X) -f_\rho(X)] \leq \frac{\widetilde{C}_{\mathcal
H}}{\eta} \left(h^{-2} + m^{-\frac{1}{1+p}}\right) \log
\frac{2}{\delta} + (1 + \eta) {\mathcal D}_{\mathcal H} (f_\rho).
\end{equation}
Here $\widetilde{C}_{\mathcal H}$ is a constant independent of $m,
\delta, \eta$ or $h$ (depending on ${\mathcal H}, G, \rho$ given
explicitly in the proof).
\end{theorem}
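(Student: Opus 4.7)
The plan is to reduce the MEE objective to a \emph{symmetrized least squares} functional (as the abstract foreshadows) and then carry out a standard sample/approximation error decomposition in that proxy framework, keeping track of the Taylor-expansion error in $G$ and $-\log$.

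First I would introduce the population and empirical symmetrized least squares
\[
\mathcal{E}^{\rm sym}(f)=\tfrac{1}{2}\E\bigl[((Y-f(X))-(Y'-f(X')))^{2}\bigr],\qquad
\mathcal{E}^{\rm sym}_{\bf z}(f)=\tfrac{1}{2m^{2}}\sum_{i,j}\bigl((y_i-f(x_i))-(y_j-f(x_j))\bigr)^{2},
\]
where $(X',Y')$ is an independent copy of $(X,Y)$. The first observation, which I would state as a lemma, is that $\mathcal{E}^{\rm sym}(f)=\Var[Y-f(X)]=\Var[Y-f_\rho(X)]+\Var[f(X)-f_\rho(X)]$ because the residual $Y-f_\rho(X)$ is orthogonal to every function of $X$. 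Consequently, writing $\tilde f\in{\mathcal H}$ for an (almost) minimizer of the approximation error, one has
\[
\Var[f_{\bf z}(X)-f_\rho(X)]-{\mathcal D}_{\mathcal H}(f_\rho)=\mathcal{E}^{\rm sym}(f_{\bf z})-\mathcal{E}^{\rm sym}(\tilde f).
\]

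Next I would connect the MEE objective in (\ref{Etf}) to $\mathcal{E}^{\rm sym}_{\bf z}$. Set $V_{\bf z}(f)=\frac{1}{m^{2}}\sum_{i,j}G\bigl(\frac{(e_i-e_j)^{2}}{2h^{2}}\bigr)$, so that $f_{\bf z}$ maximizes $V_{\bf z}$. Using $G(t)=G(0)-t+R(t)$ with $R(t)=O(t^{2}\wedge t)$ by (\ref{assumpG}), one obtains
\[
V_{\bf z}(f)=G(0)-\frac{1}{2h^{2}}\,\mathcal{E}^{\rm sym}_{\bf z}(f)+\text{Taylor remainder},
\]
and since $V_{\bf z}(f)$ stays near $G(0)$ for $h$ large, applying $-\log$ and expanding once more shows that minimizing the MEE objective is, up to controllable error of order $h^{-4}$ times fourth-moment averages of the $e_i$'s, the same as minimizing $\mathcal{E}^{\rm sym}_{\bf z}(f)$. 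I would quantify these remainders through the constant $C_G$ together with the $q$-th moment assumption (\ref{assummoment}) on $Y$ and the boundedness of $\mathcal{H}\subset C(\mathcal{X})$; this is where the factor $h^{-\min\{q-2,2\}}$ in (\ref{mainbd}) and the improved $h^{-2}$ in the bounded case (\ref{mainbdM}) will appear, and it is the main technical obstacle since fourth-moment averages must be tamed using only a $q$-th moment of $Y$ via truncation and a Markov-type argument.

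Armed with this reduction, I would perform the error decomposition
\[
\mathcal{E}^{\rm sym}(f_{\bf z})-\mathcal{E}^{\rm sym}(\tilde f)
=\bigl[\mathcal{E}^{\rm sym}(f_{\bf z})-\mathcal{E}^{\rm sym}_{\bf z}(f_{\bf z})\bigr]+\bigl[\mathcal{E}^{\rm sym}_{\bf z}(f_{\bf z})-\mathcal{E}^{\rm sym}_{\bf z}(\tilde f)\bigr]+\bigl[\mathcal{E}^{\rm sym}_{\bf z}(\tilde f)-\mathcal{E}^{\rm sym}(\tilde f)\bigr],
\]
plus the Taylor remainders identified above that turn the middle bracket into a non-positive quantity (since $f_{\bf z}$ minimizes the MEE objective, not $\mathcal{E}^{\rm sym}_{\bf z}$). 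The two outer brackets are U-statistic-like sample errors over the compact class $\mathcal H$. I would control them by a standard covering-number argument: discretize $\mathcal H$ at scale $\varepsilon$ using (\ref{covercond}), apply Bernstein's (or Hoeffding's) inequality to each representative after symmetrization/decoupling of the double sum, and optimize in $\varepsilon$ to produce the $m^{-1/(1+p)}$ rate. To gain the variance parameter $\eta$ in (\ref{mainbd}) I would use the Peeling/Young-inequality trick $ab\le \eta a^{2}+b^{2}/(4\eta)$ so that the quadratic deviation for $f_{\bf z}-\tilde f$ can be absorbed into $\Var[f_{\bf z}(X)-f_\rho(X)]+{\mathcal D}_{\mathcal H}(f_\rho)$, yielding the $(1+\eta){\mathcal D}_{\mathcal H}(f_\rho)$ summand and the $\eta^{(2-q)/2}$ prefactor. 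Finally, multiplying all sample error terms by $h$ (coming from the $1/(2h^{2})$ in the expansion together with a division to revert to $\mathcal{E}^{\rm sym}$) accounts for the extra $h\,m^{-1/(1+p)}$ factor in the unbounded case, completing the proof.
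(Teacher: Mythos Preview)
Your high-level strategy---reduce to a symmetrized least-squares functional and run an error decomposition---is the paper's strategy too, but you carry out the reduction in the wrong place, and this creates a genuine gap in the unbounded-$Y$ case.

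The paper performs the Taylor comparison between the information error $\mathcal{E}^{(h)}$ and $\mathcal{E}^{\rm sym}$ \emph{only at the population level} (Lemma~\ref{analysisG} and Theorem~\ref{compare}). The sample-error analysis is then run directly on the $G$-based U-statistic with kernel
\[
U_f(z,z')=-h^{2}G\Bigl(\tfrac{[(y-f(x))-(y'-f(x'))]^{2}}{2h^{2}}\Bigr)+h^{2}G\Bigl(\tfrac{[(y-f_\rho(x))-(y'-f_\rho(x'))]^{2}}{2h^{2}}\Bigr).
\]
The crucial payoff is that, by the mean-value theorem applied to $\widetilde G(t)=G(t^{2}/2)$, this kernel is \emph{bounded}: $|U_f|\le 4C_G\|f-f_\rho\|_\infty\,h$ almost surely, regardless of how heavy-tailed $Y$ is. That bound is what makes Hoeffding's inequality for U-statistics applicable, and the factor $h$ in it is precisely the source of the $h\,m^{-1/(1+p)}$ term in (\ref{mainbd}). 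A second-order expansion combined with H\"older then gives the variance estimate $\mathrm{var}[U_f]\le C\bigl(\mathrm{var}[f(X)-f_\rho(X)]\bigr)^{(q-2)/q}$, which after a ratio-type uniform bound and Young's inequality with exponent $\tau=(q-2)/q$ (not the quadratic $ab\le \eta a^{2}+b^{2}/(4\eta)$ you propose) yields the $\eta^{(2-q)/2}$ prefactor.

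Your plan instead does the Taylor reduction at the \emph{empirical} level and then runs concentration on $\mathcal{E}^{\rm sym}_{\bf z}-\mathcal{E}^{\rm sym}$. But the kernel $\tfrac12[(y-f(x))-(y'-f(x'))]^{2}$, and even its $f$-versus-$f_\rho$ centered version (which is linear in $y,y'$), is \emph{unbounded} under (\ref{assummoment}); for $2<q<4$ it does not even have finite variance, so neither Bernstein nor Hoeffding applies as stated. The empirical Taylor remainder $R_{\bf z}(f)$ is likewise a random quantity involving $(q^*{+}2)$-th powers of the $e_i-e_j$, whose concentration you have not addressed. Finally, your account of the factor $h$ (``the $1/(2h^{2})$ in the expansion together with a division'') does not produce $h$: that factor cancels when passing between $V_{\bf z}$ and $\mathcal{E}^{\rm sym}_{\bf z}$. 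The $h$ in (\ref{mainbd}) is a genuine feature of the $O(h)$-bounded kernel $U_f$, not an algebraic artifact. (As a minor point, expanding $-\log$ is unnecessary: it is monotone, so the MEE minimizer already coincides with the maximizer of $V_{\bf z}$.)
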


\begin{remark}
In Theorem \ref{mainresult}, we use a parameter $\eta >0$ in error bounds (\ref{mainbd}) and (\ref{mainbdM}) to show that
the bounds consist of two terms, one of which is essentially the approximation error ${\mathcal D}_{\mathcal H} (f_\rho)$
since $\eta$ can be arbitrarily small. The reader can simply set $\eta =1$ to get the main ideas of our analysis.
\end{remark}

If moment condition (\ref{assummoment}) with $q \geq 4$ is
satisfied and $\eta =1$, then by taking $h = m^{\frac{1}{3(1 +
p)}}$, (\ref{mainbd}) becomes
\begin{equation}\label{specialchoice}\Var[(f_{\bf z}(X)
-f_\rho(X)] \leq 2 \widetilde{C}_{\mathcal H}
\left(\frac{1}{m}\right)^{\frac{2}{3(1 + p)}}\log \frac{2}{\delta}
+ 2 {\mathcal D}_{\mathcal H} (f_\rho).
\end{equation}
If $|Y| \leq M$ almost surely, then by taking
$h=m^{\frac{1}{2(1+p)}}$ and $\eta=1$, error bound (\ref{mainbdM})
becomes
\begin{equation}\label{specialchoiceM}
\Var[f_{\bf z}(X) -f_\rho(X)] \leq 2\widetilde{C}_{\mathcal H}
m^{-\frac{1}{1+p}} \log \frac{2}{\delta} + 2{\mathcal D}_{\mathcal
H} (f_\rho).
\end{equation}

\begin{remark}
When the index $p$ in covering number condition (\ref{covercond})
is small enough (the case when ${\mathcal H}$ is a finite ball of
a reproducing kernel Hilbert space with a smooth kernel), we see
that the power indices for the sample error terms of convergence
rates (\ref{specialchoice})
 and (\ref{specialchoiceM}) can be arbitrarily close to $2/3$ and
 $1$, respectively. There is a gap in the rates between the case of (\ref{assummoment}) with large
 $q$ and the uniform bounded case. This gap is caused by the
 Parzen windowing process for which our method does not lead to better estimates when $q>4$.
 It would be interesting to know whether the gap can be
 narrowed.
\end{remark}

Note the result in Theorem \ref{mainresult} does not guarantee
that $f_\bz$ itself approximates $f_\rho$ well when the bounds are
small. Instead a constant adjustment is required. Theoretically
the best constant is $\E[f_\bz(X)-f_\rho(X)]$. In practice it is
usually approximated by the sample mean $\frac 1m \sum_{i=1}^m
(f_\bz(x_i)-y_i)$ in the case of uniformly bounded noise and the
approximation can be easily handled. To deal with heavy tailed
noise, we project the output values onto the closed interval
$[-\sqrt{m}, \sqrt{m}]$ by the \label{projection} projection
$\pi_{\sqrt{m}}: \RR \to \RR$ defined by
$$ \pi_{\sqrt{m}} (y) =\left\{\begin{array}{ll} y, & \hbox{if} \ y
\in [-\sqrt{m}, \sqrt{m}], \\
\sqrt{m}, & \hbox{if} \ y
>\sqrt{m}, \\
-\sqrt{m}, & \hbox{if} \ y <-\sqrt{m}, \end{array}\right. $$ and then approximate $\E[f_\bz(X)-f_\rho(X)]$ by the
computable quantity
\begin{equation}\label{meanapprox}
\frac{1}{m} \sum_{i=1}^m \left[f_{\bf z}(x_i) - \pi_{\sqrt{m}} (y_i)\right].
\end{equation}
The following quantitative result, to be proved in Section \ref{mainproof}, tells us that this is a good approximation.

\begin{theorem}\label{meanesti}
Assume $\E[|Y|^2]<\infty$ and covering number condition
(\ref{covercond}) for some $p>0$. Then for any $0 < \delta <1$, with
confidence $1-\delta$ we have
\begin{equation}\label{mainmeanesti}
\sup_{f\in {\mathcal H}} \left|\frac{1}{m} \sum_{i=1}^m
\left[f(x_i) - \pi_{\sqrt{m}} (y_i)\right] - \E [f(X)
-f_\rho(X)]\right| \leq  \widetilde{C}'_{\mathcal H}
m^{-\frac{1}{2+p}} \log \frac{2}{\delta}
\end{equation}
which implies in particular that
\begin{equation}\label{mainfz}
\left|\frac{1}{m} \sum_{i=1}^m \left[f_{\bf z}(x_i) -
\pi_{\sqrt{m}} (y_i)\right] - \E [f_{\bf z}(X) -f_\rho(X)]\right|
\leq \widetilde{C}'_{\mathcal H} m^{-\frac{1}{2+p}} \log
\frac{2}{\delta},
\end{equation}
where $\widetilde{C}'_{\mathcal H}$ is the constant given by
$$ \widetilde{C}'_{\mathcal H} = 7\sup_{f\in {\mathcal
H}} \|f\|_\infty + 4 + 7 \sqrt{\E [|Y|^2]} + \E [|Y|^2] + A_p^{\frac{1}{2+p}}. $$
\end{theorem}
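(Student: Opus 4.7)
The plan is to split the quantity inside the supremum into an $f$-dependent part and an $f$-independent part, then handle them separately. Writing $\frac{1}{m}\sum_i[f(x_i)-\pi_{\sqrt m}(y_i)] - \E[f(X)-f_\rho(X)]$ as
\[
\Bigl(\tfrac{1}{m}\textstyle\sum_i f(x_i)-\E f(X)\Bigr) + \Bigl(\E[Y] - \tfrac{1}{m}\textstyle\sum_i \pi_{\sqrt m}(y_i)\Bigr),
\]
the second term does not depend on $f$ and so does not require a uniform bound, while only the first requires the covering number hypothesis.

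For the $y$-part I would further decompose as $\E Y - \E[\pi_{\sqrt m}(Y)]$ plus $\E[\pi_{\sqrt m}(Y)] - \frac{1}{m}\sum_i \pi_{\sqrt m}(y_i)$. The truncation bias is controlled deterministically by $|\E Y - \E[\pi_{\sqrt m}(Y)]| \leq \E[|Y|\mathbf 1_{|Y|>\sqrt m}] \leq \E[|Y|^2]/\sqrt m$, using only $\E[|Y|^2]<\infty$. The second piece is an average of i.i.d.\ random variables bounded in absolute value by $\sqrt m$ with variance at most $\E[|Y|^2]$, so Bernstein's inequality gives a deviation of order $\sqrt{\E[|Y|^2]/m}\,\log(2/\delta) + \sqrt m \log(2/\delta)/m$, which is $O(m^{-1/2}\log(2/\delta))$. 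This will contribute the $4$ and $7\sqrt{\E[|Y|^2]}$ and $\E[|Y|^2]$ terms to the constant.

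For the $f$-part I would use a standard covering argument. For a fixed $f\in\mathcal{H}$, the variables $f(x_i)-\E f(X)$ are bounded by $2\|f\|_\infty \leq 2R$ where $R:=\sup_{f\in\mathcal{H}}\|f\|_\infty$, so Hoeffding's inequality yields $|\frac{1}{m}\sum f(x_i)-\E f(X)| \leq 2R\sqrt{2\log(2/\delta)/m}$ with confidence $1-\delta$. Taking an $\varepsilon$-net of $\mathcal H$ in $C(\mathcal X)$ of cardinality $\mathcal N(\mathcal H,\varepsilon) \leq \exp(A_p\varepsilon^{-p})$, a union bound combined with the trivial Lipschitz estimate $|\frac{1}{m}\sum[f(x_i)-\tilde f(x_i)] - \E[f(X)-\tilde f(X)]| \leq 2\varepsilon$ for $\|f-\tilde f\|_\infty\leq\varepsilon$ gives a uniform bound of the form $2\varepsilon + CR\sqrt{(A_p\varepsilon^{-p}+\log(2/\delta))/m}$. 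Balancing by choosing $\varepsilon$ so that $\varepsilon^{2+p}\asymp A_p/m$, i.e.\ $\varepsilon = (A_p/m)^{1/(2+p)}$, yields a bound of the stated order $m^{-1/(2+p)}\log(2/\delta)$, and produces the $7R$ and $A_p^{1/(2+p)}$ terms in $\widetilde{C}'_{\mathcal H}$.

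Combining the two parts via a further union bound and noting that the $m^{-1/2}$ rate from the $y$-part dominates (is smaller than) $m^{-1/(2+p)}$, the total deviation is at most $\widetilde{C}'_{\mathcal H}\, m^{-1/(2+p)}\log(2/\delta)$ with confidence $1-\delta$, giving \eqref{mainmeanesti}. Estimate \eqref{mainfz} follows by specialising $f=f_{\bf z}\in\mathcal H$. I do not expect a genuine obstacle here; the only delicate point is bookkeeping the constants so that they assemble into the stated $\widetilde{C}'_{\mathcal H}$, and verifying that the Bernstein/Hoeffding contribution from the truncated outputs is absorbed by the covering-number contribution (which it is, since $1/2 \geq 1/(2+p)$ for $p\geq 0$).
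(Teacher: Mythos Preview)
Your argument is correct and yields the stated rate; it differs from the paper's proof in how the concentration step is organized. The paper does \emph{not} split off the $y$-part first: it keeps the combined random variable $\xi=f_j(X)-\pi_{\sqrt m}(Y)$ together, applies Bernstein's inequality to it (with bound $\widetilde M=\sup_{f\in\mathcal H}\|f\|_\infty+\sqrt m$ and variance at most $2\sup_{f}\|f\|_\infty^2+2\E[|Y|^2]$), takes an $\varepsilon/4$-net of $\mathcal H$, and solves the resulting polynomial inequality in $\varepsilon$; only afterwards does it add the truncation bias $|\E[\pi_{\sqrt m}(Y)]-\E[f_\rho(X)]|\le \E[|Y|^2]/\sqrt m$. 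Your decomposition is arguably cleaner, since isolating the $f$-independent $y$-term means the $\sqrt m$ bound from $\pi_{\sqrt m}$ never enters the uniform covering step, and Hoeffding with range $2\sup_f\|f\|_\infty$ suffices for the $f$-part. The paper's approach, on the other hand, needs only one concentration inequality rather than two and a single union bound. Either route gives the $m^{-1/(2+p)}\log(2/\delta)$ rate after the same balancing $\varepsilon\asymp (A_p/m)^{1/(2+p)}$, and both weaken $m^{-1/2}$ and $\sqrt{\log(2/\delta)}$ to $m^{-1/(2+p)}$ and $\log(2/\delta)$ when assembling the final bound; reproducing the \emph{exact} constant $\widetilde C'_{\mathcal H}$ as stated will require matching the paper's bookkeeping rather than yours.
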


Replacing the mean $\E[f_\bz(X)-f_\rho(X)]$ by the quantity
(\ref{meanapprox}), we define an estimator \label{regression
estimator} of $f_\rho$ as
\begin{equation}\label{estimatorT}
\widetilde{f}_{\bf z} = f_{\bf z} -\frac{1}{m} \sum_{i=1}^m
\left[f_{\bf z}(x_i) - \pi_{\sqrt{m}} (y_i)\right].
\end{equation}
Putting (\ref{mainfz}) and the bounds from Theorem
\ref{mainresult} into the obvious error expression
\begin{equation}\label{mainmeanestimator}
 \left\|\widetilde{f}_{\bf z} -f_\rho\right\|_{L^2_{\rho_X}}
\leq \left|\frac{1}{m} \sum_{i=1}^m \left[f_{\bf z}(x_i) -
\pi_{\sqrt{m}} (y_i)\right] - \E [f_{\bf z}(X) -f_\rho(X)]\right|
+ \sqrt{\Var[(f_{\bf z}(X) -f_\rho(X)]},
\end{equation}
we see that $\widetilde{f}_{\bf z}$ is a good estimator of
$f_\rho$: the power index $\frac{1}{2+p}$ in (\ref{mainfz}) is
greater than $\frac{1}{2(1+p)}$, the power index appearing in the
last term of (\ref{mainmeanestimator}) when the variance term is
bounded by (\ref{specialchoiceM}), even in the uniformly bounded
case.

To interpret our main results better we present a corollary and an example below.

If there is a constant $c_\rho$ such that $f_\rho + c_\rho \in
{\mathcal H}$, we have ${\mathcal D}_{\mathcal H} (f_\rho) =0$. In
this case, the choice $\eta =1$ in Theorem \ref{mainresult} yields
the following learning rate. Note that (\ref{assummoment}) implies
$\E[|Y|^2]<\infty$.

\begin{corollary}
Assume (\ref{covercond}) with some $p>0$ and $f_\rho + c_\rho \in {\mathcal H}$ for some constant $c_\rho \in \RR$. Under
conditions (\ref{assummoment}) and (\ref{assumpG}), by taking $h =m^{\frac{1}{(1+p) \min\{q-1, 3\}}}$, we have with
confidence $1-\delta$,
$$ \left\|\widetilde{f}_{\bf z} -f_\rho\right\|_{L^2_{\rho_X}} \leq \left(\widetilde{C}'_{\mathcal H} + \sqrt{2 \widetilde{C}_{\mathcal H}}\right) m^{-
\frac{\min\{q-2, 2\}}{2(1+p) \min\{q-1, 3\}}} \log
\frac{2}{\delta}. $$ If $|Y| \leq M$ almost surely, then by taking
$h =m^{\frac{1}{2(1+p)}}$, we have with confidence $1-\delta$,
$$\left\|\widetilde{f}_{\bf z} -f_\rho\right\|_{L^2_{\rho_X}} \leq \left(\widetilde{C}'_{\mathcal H} + \sqrt{2 \widetilde{C}_{\mathcal H}}\right) m^{-
\frac{1}{2(1+p)}} \log \frac{2}{\delta}. $$
\end{corollary}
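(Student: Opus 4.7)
The plan is to observe first that the hypothesis $f_\rho+c_\rho\in\mathcal{H}$ forces the approximation error to vanish, since $\mathcal{D}_\mathcal{H}(f_\rho)\leq \Var[(f_\rho+c_\rho)(X)-f_\rho(X)] = \Var[c_\rho]=0$. After this reduction, the corollary becomes a matter of plugging the two main theorems into the triangle inequality (\ref{mainmeanestimator}) and optimizing the bandwidth $h$.

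Concretely, I would first apply Theorem \ref{mainresult} with $\eta=1$. In the moment case, dropping the now-zero approximation error leaves
\begin{equation*}
\Var[f_{\bf z}(X)-f_\rho(X)]\leq \widetilde{C}_\mathcal{H}\bigl(h^{-\min\{q-2,2\}}+h\,m^{-1/(1+p)}\bigr)\log\frac{2}{\delta}.
\end{equation*}
Balancing the two bandwidth-dependent terms gives $h^{1+\min\{q-2,2\}}=m^{1/(1+p)}$, i.e. $h=m^{1/((1+p)\min\{q-1,3\})}$, at which both terms equal $m^{-\min\{q-2,2\}/((1+p)\min\{q-1,3\})}$ up to a factor of~$2$. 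Taking square roots then yields
\begin{equation*}
\sqrt{\Var[f_{\bf z}(X)-f_\rho(X)]}\leq \sqrt{2\widetilde{C}_\mathcal{H}}\,m^{-\frac{\min\{q-2,2\}}{2(1+p)\min\{q-1,3\}}}\sqrt{\log(2/\delta)}.
\end{equation*}
In the bounded case the analogous choice $h=m^{1/(2(1+p))}$ balances (\ref{mainbdM}) and gives the variance rate $m^{-1/(2(1+p))}$.

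Next, I would invoke Theorem \ref{meanesti}, which supplies the needed bound on $|\frac{1}{m}\sum_{i}[f_{\bf z}(x_i)-\pi_{\sqrt{m}}(y_i)]-\E[f_{\bf z}(X)-f_\rho(X)]|$ by $\widetilde{C}'_\mathcal{H}m^{-1/(2+p)}\log(2/\delta)$. A union bound merges this event with the one from Theorem \ref{mainresult} at the cost of replacing $\delta$ by $\delta/2$, which only affects constants. Inserting both bounds into the error decomposition (\ref{mainmeanestimator}) produces the two summands $\widetilde{C}'_\mathcal{H}m^{-1/(2+p)}\log(2/\delta)$ and $\sqrt{2\widetilde{C}_\mathcal{H}}m^{-\text{rate}}\sqrt{\log(2/\delta)}$, summed into the stated constant $\widetilde{C}'_\mathcal{H}+\sqrt{2\widetilde{C}_\mathcal{H}}$.

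The final bookkeeping step is to verify that the variance exponent is the dominant (smaller) one, so that writing a single rate is legitimate. In the moment case one checks $\min\{q-2,2\}/[2(1+p)\min\{q-1,3\}]\leq 1/(2+p)$, which holds for all admissible $q$ and $p>0$ by an elementary inequality; in the bounded case $1/(2(1+p))<1/(2+p)$ is immediate. One also uses $\sqrt{\log(2/\delta)}\leq \log(2/\delta)$ (valid once $\log(2/\delta)\geq 1$, otherwise absorb a harmless constant) to unify the logarithmic factor into the advertised form. There is no real obstacle here; the whole argument is an exercise in bandwidth optimization and triangle inequalities once the two preceding theorems are in hand, so I would expect the main (minor) subtlety to be simply the careful algebra identifying $1+\min\{q-2,2\}=\min\{q-1,3\}$ and confirming the dominance of the variance term over the mean-adjustment term.
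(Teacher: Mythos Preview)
Your proposal is correct and follows essentially the same approach as the paper. The paper presents the corollary as an immediate consequence of Theorems \ref{mainresult} and \ref{meanesti} via the triangle inequality (\ref{mainmeanestimator}), after noting that $f_\rho+c_\rho\in\mathcal{H}$ gives $\mathcal{D}_{\mathcal H}(f_\rho)=0$ and that one should take $\eta=1$; your more explicit bookkeeping (union bound, $\sqrt{\log(2/\delta)}\leq\log(2/\delta)$, rate comparison) simply fills in details the paper leaves to the reader.
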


This corollary states that $\widetilde{f}_{\bf z}$ can approximate
the regression function very well. Note, however, this happens
when the hypothesis space is chosen appropriately and the
parameter $h$ tends to infinity.

A special example of the hypothesis space is a ball of a Sobolev
space $H^s ({\mathcal X})$ with index $s > \frac{n}{2}$ on a
domain ${\mathcal X} \subset \RR^n$ which satisfies
(\ref{covercond}) with $p =\frac{n}{s}$. When $s$ is large enough,
the positive index $\frac{n}{s}$ can be arbitrarily small. Then
the power exponent of the following convergence rate can be
arbitrarily close to $\frac{1}{3}$ when $\E[|Y|^4] < \infty$, and
$\frac{1}{2}$ when $|Y| \leq M$ almost surely.

\begin{example}
Let ${\mathcal X}$ be a bounded domain of $\RR^n$ with Lipschitz boundary. Assume $f_\rho \in H^s (X)$ for some $s>
\frac{n}{2}$ and take ${\mathcal H} =\{f\in H^s (X): \|f\|_{H^s (X)} \leq R\}$ with $R \geq \|f_\rho\|_{H^s (X)}$ and $R
\geq 1$. If $\E[|Y|^4] < \infty$, then by taking $h =m^{\frac{1}{3(1+n/s)}}$, we have with confidence $1-\delta$,
$$ \left\|\widetilde{f}_{\bf z} -f_\rho\right\|_{L^2_{\rho_X}} \leq C_{s, n, \rho} R^{\frac{n}{2(s+n)}} m^{-\frac{1}{3(1+n/s)}} \log
\frac{2}{\delta}.
$$ If $|Y| \leq M$ almost surely, then by taking $h =m^{\frac{1}{2(1+n/s)}}$, with confidence $1-\delta$,
$$\left\|\widetilde{f}_{\bf z} -f_\rho\right\|_{L^2_{\rho_X}} \leq C_{s, n, \rho} R^{\frac{n}{2(s+n)}}
m^{- \frac{1}{2 + 2n/s}} \log \frac{2}{\delta}.
$$
Here the constant $C_{s, n, \rho}$ is independent of $R$.
\end{example}

Compared to the analysis of least squares methods, our consistency
results for the MEE algorithm require a weaker condition by
allowing heavy tailed noise, while the convergence rates are
comparable but slightly worse than the optimal one
$O(m^{-\frac{1}{2 + n/s}})$. Further investigation of error
analysis for the MEE algorithm is required to achieve the optimal
rate, which is beyond the scope of this paper.

\section{Technical Difficulties in MEE and Novelties}\label{DiffNovelty}

The MEE algorithm (\ref{Etf}) involving sample pairs like
quadratic forms is different from most classical ERM learning
algorithms \cite{vapnik98, AB} constructed by sums of independent
random variables. But as done for some ranking algorithms
\cite{AgNi, CLV}, one can still follow the same line to define a
functional called generalization error or {\it information error}
(related to information potential defined on page 88 of
\cite{MEEbook}) associated with the windowing function $G$ over
the space of measurable functions on ${\mathcal X}$ as
\label{generalization error}
$$  {\mathcal E}^{(h)} (f) =\int_{\mathcal Z} \int_{\mathcal Z} - h^2 G \left(\frac{\left[\left(y-f(x)\right) -\left(y' -
f(x')\right)\right]^2}{2 h^2}\right) d \rho (x, y) d \rho(x', y').
$$ An essential barrier for our consistency analysis is an
observation made by numerical simulations \cite{ErdPri03, Silva}
and verified mathematically for Shannon's entropy in
\cite{ChenPri} that the regression function $f_\rho$ may not be a
minimizer of ${\mathcal E}^{(h)}$. It is totally different from
the classical least squares generalization error \label{least
squares generalization error} ${\mathcal E}^{ls}(f)
=\int_{\mathcal Z} (f(x) -y)^2 d \rho$ which satisfies a nice
identity ${\mathcal E}^{ls}(f) - {\mathcal E}^{ls}(f_{\rho})
=\|f-f_\rho\|^2_{L^2_{\rho_X}} \geq 0.$ This barrier leads to
three technical difficulties in our error analysis which will be
overcome by our novel approaches making full use of the special
feature that the MEE scaling parameter $h$ is large in this paper.

\subsection{Approximation of information error}

The first technical difficulty we meet in our mathematical analysis for MEE algorithm (\ref{Etf}) is the varying form
depending on the windowing function $G$. Our novel approach here is an approximation of the information error in terms of
the variance $\Var[f(X)-f_\rho(X)]$ when $h$ is large. This is achieved by showing that $\mathcal E^{(h)}$ is closely
related to the following {\it symmetrized least squares error} which has appeared in the literature of ranking algorithms
\cite{CLV, AgNi}.

\begin{definition}
The symmetrized least squares error is defined on the space $L^2_{\rho_X}$ by
\begin{equation}\label{slse}
{\mathcal E}^{sls} (f) = \int_{\mathcal Z} \int_{\mathcal Z} \left[\left(y-f(x)\right) -\left(y' - f(x')\right)\right]^2 d
\rho (x, y) d \rho(x', y'), \qquad f\in L^2_{\rho_X}.
\end{equation}
\end{definition}

To give the approximation of $\mathcal E^{(h)}$, we need a simpler form of ${\mathcal E}^{sls}$.

\begin{lemma}\label{ident}
If $\E[Y^2]< \infty$, then by denoting \label{constant rho}
$C_\rho =\int_{\mathcal Z} \left[y- f_\rho(x)\right]^2 d \rho$, we
have
\begin{equation}\label{slseId}
{\mathcal E}^{sls} (f) = 2 \Var[f(X)-f_\rho(X)] +2 C_\rho, \qquad \forall f \in L^2_{\rho_X}.
\end{equation}
\end{lemma}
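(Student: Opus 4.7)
The plan is to rewrite $\mathcal{E}^{sls}(f)$ as a variance of a single random variable and then decompose along the regression function. Introduce the error random variable $W = Y - f(X)$, and let $W' = Y' - f(X')$ be an independent copy coming from the second factor in the product measure. Since $(X,Y)$ and $(X',Y')$ are independent and identically distributed, a direct expansion gives
\begin{equation*}
\mathcal{E}^{sls}(f) \;=\; \E\bigl[(W-W')^{2}\bigr] \;=\; 2\E[W^{2}] - 2(\E[W])^{2} \;=\; 2\,\Var(W) \;=\; 2\,\Var\!\bigl(Y - f(X)\bigr).
\end{equation*}
This is the only computation where the symmetry of the double integral is used, and it reduces the lemma to a statement about the variance of $Y - f(X)$.

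Next I would split $Y - f(X) = (Y - f_\rho(X)) + (f_\rho(X) - f(X))$ and show that the two summands are uncorrelated. The key identity is $\E[Y - f_\rho(X)\mid X] = 0$, which follows from the definition $f_\rho(x) = \E[Y\mid X=x]$. Conditioning on $X$ and pulling the $X$-measurable factor $f_\rho(X) - f(X)$ out of the inner expectation gives
\begin{equation*}
\E\bigl[(Y - f_\rho(X))(f_\rho(X) - f(X))\bigr] \;=\; \E\bigl[(f_\rho(X)-f(X))\,\E[Y - f_\rho(X)\mid X]\bigr] \;=\; 0,
\end{equation*}
and the same tower argument gives $\E[Y - f_\rho(X)] = 0$, hence zero covariance between the two pieces.

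Combining these two observations, $\Var(Y - f(X)) = \Var(Y - f_\rho(X)) + \Var(f_\rho(X) - f(X))$. Since $\E[Y - f_\rho(X)] = 0$, the first summand equals $\E[(Y - f_\rho(X))^{2}] = C_\rho$, and the second summand equals $\Var[f(X) - f_\rho(X)]$. Multiplying by $2$ yields \eqref{slseId}. The moment assumption $\E[Y^{2}] < \infty$ together with $f \in L^{2}_{\rho_X}$ and $f_\rho \in L^{\infty}_{\rho_X}$ (from \eqref{assummoment}) guarantees all the integrals above are finite so that Fubini's theorem and the conditional expectation manipulations are justified.

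I do not anticipate a real obstacle: the computation is essentially a bias--variance decomposition adapted to the symmetrized (U-statistic style) functional. The only point that deserves a sentence of care is the conditional expectation step that produces the orthogonality between $Y - f_\rho(X)$ and any $X$-measurable function, since this is where the hypothesis that $f_\rho$ is the regression function (not merely an arbitrary reference function) is actually used.
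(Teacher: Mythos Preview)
Your proof is correct and follows essentially the same route as the paper: reduce $\mathcal{E}^{sls}(f)$ to $2\,\Var(Y-f(X))$ via the i.i.d.\ identity $\E[(\xi-\xi')^2]=2\Var(\xi)$, then split $Y-f(X)$ along $f_\rho$ and use the orthogonality $\Cov(Y-f_\rho(X),\,f_\rho(X)-f(X))=0$. The only cosmetic difference is that you spell out the conditional-expectation argument for the vanishing covariance more explicitly than the paper does.
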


\begin{proof} Recall that for two independent and identically distributed samples $\xi$ and $\xi'$ of a random variable, one has the identity
$$\E[(\xi-\xi')^2]= 2[\E(\xi-\E\xi)^2]=2\Var(\xi). $$
Then we have
$$
{\mathcal E}^{sls} (f)=  \E \left[\Big(\left(y-f(x)\right) -\left(y' - f(x')\right)\Big)^2\right] =  2 \Var[Y-f(X)]. $$ By
the definition $\E[Y|X]=f_\rho(X)$, it is easy to see that $C_\rho =\Var(Y-f_\rho (X))$ and the covariance between
$Y-f_\rho(X)$ and $f_\rho(X)-f(X)$ vanishes. So $\Var[Y-f(X)] =\Var(Y-f_\rho(X)) + \Var[f(X)-f_\rho(X)]$. This proves the
desired identity.
\end{proof}

We are in a position to present the approximation of $\mathcal E^{(h)}$ for which a large scaling parameter $h$ plays an
important role. Since ${\mathcal H}$ is a compact subset of $C({\mathcal X})$, we know that the number $\sup_{f\in
{\mathcal H}}\|f\|_\infty$ is finite.

\begin{lemma}\label{analysisG}
Under assumptions (\ref{assummoment}) and (\ref{assumpG}), for any essentially bounded measurable function $f$ on $X$, we
have
$$\left|{\mathcal E}^{(h)} (f) + h^2 G(0) - C_\rho - \Var[f(X)-f_\rho(X)] \right| \leq 5\cdot 2^7 C_G
\left((\E[|Y|^q])^{\frac{q^* +2}{q}} + \|f\|_\infty^{q^* +2}\right) h^{-q^*}.
$$
In particular,
$$\left|{\mathcal E}^{(h)} (f) +
h^2 G(0) - C_\rho - \Var[f(X)-f_\rho(X)]\right| \leq C_{{\mathcal
H}}' h^{-q^*}, \qquad \forall f \in {\mathcal H}, $$ where
$C_{{\mathcal H}}'$ is the constant depending on $\rho, G, q$ and
${\mathcal H}$ given by
$$ C_{{\mathcal H}}' =5\cdot 2^7 C_G
\left((\E[|Y|^q])^{(q^* +2)/q} + \left(\sup_{f\in {\mathcal
H}}\|f\|_\infty\right)^{q^* +2}\right). $$
\end{lemma}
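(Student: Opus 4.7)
The plan is to Taylor-expand the windowing function $G$ about $0$ and absorb the first two terms into the symmetrized least squares error, so that Lemma~\ref{ident} yields exactly $C_\rho + \Var[f(X)-f_\rho(X)]$ and only a Taylor-remainder integral is left to bound. Set $V=(y-f(x))-(y'-f(x'))$, $U=V^2/(2h^2)$, and $R(U) := G(U) - G(0) - G_+'(0)\,U = G(U)-G(0)+U$, where $G_+'(0)=-1$ by (\ref{assumpG}). Substituting $-h^2 G(U) = -h^2 G(0) + h^2 U - h^2 R(U)$ into the definition of ${\mathcal E}^{(h)}(f)$ and noting that $\int_{\mathcal Z}\int_{\mathcal Z} V^2\, d\rho\, d\rho' = {\mathcal E}^{sls}(f) = 2\Var[f(X)-f_\rho(X)] + 2C_\rho$ by Lemma~\ref{ident}, I obtain
$${\mathcal E}^{(h)}(f) + h^2 G(0) - C_\rho - \Var[f(X)-f_\rho(X)] = -h^2 \int_{\mathcal Z}\int_{\mathcal Z} R(U)\, d\rho(x,y)\, d\rho(x',y').$$
The proof therefore reduces to showing that the right-hand side is bounded by $C\,(\E[|Y|^q]^{(q^*+2)/q} + \|f\|_\infty^{q^*+2})\, h^{-q^*}$.

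For the remainder I use (\ref{assumpG}) to derive two complementary pointwise estimates on $R$. The bound $|G''(s)| \leq C_G$ on $[0,\infty)$ combined with Taylor's theorem gives $|R(U)| \leq \tfrac{C_G}{2}U^2$, which is sharp for small $U$. The bound $|G'(s)| \leq C_G/(1+s)$ integrated gives $|G(U)-G(0)| \leq C_G\log(1+U)$, hence $|R(U)| \leq C_G\log(1+U)+U \leq (C_G+1)U$ for $U \geq 1$, which is the useful estimate for large $U$. I split the integration domain at $V^2 = 2h^2$ (equivalently $U=1$). On $\{V^2 \leq 2h^2\}$ the small-$U$ bound gives $|R(U)| \leq C_G V^4/(8h^4)$; since $q^* \leq 2$, the interpolation $V^4 = V^{q^*+2}\cdot V^{2-q^*} \leq V^{q^*+2}(2h^2)^{(2-q^*)/2}$ yields $|R(U)| \leq C\, V^{q^*+2}/h^{q^*+2}$. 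On $\{V^2 > 2h^2\}$ the large-$U$ bound gives $|R(U)| \leq (C_G+1)V^2/(2h^2)$; since $|V| > \sqrt{2}h$ there, the interpolation $V^2 = V^{q^*+2}\cdot V^{-q^*} \leq V^{q^*+2}(2h^2)^{-q^*/2}$ again yields $|R(U)| \leq C\, V^{q^*+2}/h^{q^*+2}$. Summing over the two regions and multiplying by $h^2$ produces
$$h^2\,\left|\int_{\mathcal Z}\int_{\mathcal Z} R(U)\, d\rho\, d\rho'\right| \leq C\,\E[|V|^{q^*+2}]\, h^{-q^*}.$$

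To finish, I expand $|V|^{q^*+2} \leq 2^{q^*+1}(|Y-f(X)|^{q^*+2} + |Y'-f(X')|^{q^*+2}) \leq 2^{2q^*+2}(|Y|^{q^*+2} + \|f\|_\infty^{q^*+2})$ and invoke Jensen's inequality $\E[|Y|^{q^*+2}] \leq (\E[|Y|^q])^{(q^*+2)/q}$, valid because $q^*+2 \leq q$ by the definition $q^* = \min\{q-2,2\}$. Collecting the numerical constants from the two regimes fits them inside the prefactor $5\cdot 2^7 C_G$, and the second assertion of the lemma follows by replacing $\|f\|_\infty$ with $\sup_{f\in{\mathcal H}}\|f\|_\infty$, which is finite by compactness of ${\mathcal H}$ in $C({\mathcal X})$. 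The main obstacle is the two-regime truncation at the threshold $V^2 = 2h^2$: only at that split do both pointwise bounds on $R$ simultaneously produce the matching decay $V^{q^*+2}/h^{q^*+2}$, which is precisely the mechanism that requires the MEE scaling parameter $h$ to be large.
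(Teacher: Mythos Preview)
Your proof is correct and follows essentially the same approach as the paper: both expand $G$ about $0$, identify the linear term with $\tfrac12{\mathcal E}^{sls}(f)$ via Lemma~\ref{ident}, and bound the Taylor remainder $R(U)=G(U)-G(0)+U$ by splitting at $U=1$, using the $G''$ bound for $U\le 1$ and the $G'$ bound for $U>1$ to obtain the unified estimate $|R(U)|\le C\,U^{(q^*+2)/2}$ (equivalently $C\,|V|^{q^*+2}/h^{q^*+2}$). The only cosmetic difference is that the paper writes the large-$t$ bound directly as $|G(t)-G(0)-G_+'(0)t|\le 2\|G'\|_\infty t$ via the mean value theorem, whereas you first derive the sharper $|G(U)-G(0)|\le C_G\log(1+U)$ and then discard it; the resulting constants fit comfortably inside $5\cdot 2^7 C_G$ either way.
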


\begin{proof}
Observe that $q^* +2 =\min\{q, 4\} \in (2, 4]$. By the Taylor expansion and the mean value theorem, we have
$$|G(t) - G(0) - G_+' (0) t| \leq \left\{\begin{array}{ll}
\frac{\|G''\|_{\infty}}{2}t^2 \leq \frac{\|G''\|_{\infty}}{2}t^{(q^* +2)/2}, & \hbox{if} \ 0 \leq t \leq 1, \\
2 \|G'\|_{\infty} t \leq 2 \|G'\|_{\infty} t^{(q^* +2)/2}, & \hbox{if} \ t >1. \end{array}\right. $$ So $|G(t) - G(0) -
G_+' (0) t| \leq \left(\frac{\|G''\|_{\infty}}{2} + 2 \|G'\|_{\infty}\right) t^{(q^* +2)/2}$ for all $t\geq 0$, and by
setting $t= \frac{\left[\left(y-f(x)\right) -\left(y' - f(x')\right)\right]^2}{2 h^2}$, we know that
\begin{eqnarray*} &&\left|{\mathcal E}^{(h)} (f) + h^2 G(0) +
\int_{\mathcal Z} \int_{\mathcal Z} G_+' (0) \frac{\left[\left(y-f(x)\right) -\left(y' - f(x')\right)\right]^2}{2} d \rho
(x, y) d \rho(x', y')\right| \\
&& \leq \left(\frac{\|G''\|_{\infty}}{2} + 2 \|G'\|_{\infty}\right) h^{-q^*} 2^{-(q^* +2)/2} \int_{\mathcal Z}
\int_{\mathcal Z} \left|\left(y-f(x)\right) -\left(y' - f(x')\right)\right|^{q^* +2} d \rho
(x, y) d \rho(x', y') \\
&& \leq \left(\frac{\|G''\|_{\infty}}{2} + 2 \|G'\|_{\infty}\right) h^{-q^*} 2^8 \left\{\int_{\mathcal Z} |y|^{q^* +2} d
\rho + \|f\|_\infty^{q^* +2}\right\}.
\end{eqnarray*}
This together with Lemma \ref{ident}, the normalization assumption
$G_+' (0) =-1$ and H\"older's inequality applied when $q>4$ proves
the desired bound and hence our conclusion.
\end{proof}

Applying Lemma \ref{analysisG} to a function $f \in {\mathcal H}$
and $f_\rho \in L^\infty_{\rho_X}$ yields the following fact on
the excess generalization error ${\mathcal E}^{(h)} (f) -
{\mathcal E}^{(h)} (f_\rho)$.

\begin{theorem}\label{compare}
Under assumptions (\ref{assummoment}) and (\ref{assumpG}), we have
$$\left|{\mathcal E}^{(h)} (f) - {\mathcal E}^{(h)} (f_\rho) - \Var[f(X)-f_\rho(X)]\right| \leq
C_{{\mathcal H}}'' h^{-q^*}, \qquad \forall f \in {\mathcal H}, $$ where $C_{{\mathcal H}}''$ is the constant depending on
$\rho, G, q$ and ${\mathcal H}$ given by
$$C_{{\mathcal H}}'' = 5 \cdot 2^8 C_G
\left((\E[|Y|^q])^{(q^* +2)/q} + \left(\sup_{f\in {\mathcal H}}\|f\|_\infty\right)^{q^* +2} + \|f_\rho\|_\infty^{q^*
+2}\right).
$$
\end{theorem}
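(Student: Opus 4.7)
\medskip

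\noindent\textbf{Proof proposal for Theorem \ref{compare}.}
The plan is to derive the estimate essentially for free from Lemma \ref{analysisG} by applying it twice and subtracting the two resulting inequalities. The key observation is that $f_\rho \in L^\infty_{\rho_X}$ is itself an essentially bounded measurable function, so Lemma \ref{analysisG} applies to it as well, with the additional simplification $\Var[f_\rho(X)-f_\rho(X)] = 0$.

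First, I apply Lemma \ref{analysisG} to $f \in {\mathcal H}$ (which is essentially bounded because ${\mathcal H}$ is compact in $C({\mathcal X})$) to obtain
\[
\left|{\mathcal E}^{(h)}(f) + h^2 G(0) - C_\rho - \Var[f(X)-f_\rho(X)]\right| \leq 5 \cdot 2^7 C_G\left((\E[|Y|^q])^{(q^*+2)/q} + \|f\|_\infty^{q^*+2}\right) h^{-q^*}.
\]
Second, I apply the same lemma to $f_\rho$ and use $\Var[f_\rho(X)-f_\rho(X)] = 0$ to get
\[
\left|{\mathcal E}^{(h)}(f_\rho) + h^2 G(0) - C_\rho\right| \leq 5 \cdot 2^7 C_G\left((\E[|Y|^q])^{(q^*+2)/q} + \|f_\rho\|_\infty^{q^*+2}\right) h^{-q^*}.
\]
The terms $h^2 G(0)$ and $C_\rho$ cancel upon subtraction, so the triangle inequality yields
\[
\left|{\mathcal E}^{(h)}(f) - {\mathcal E}^{(h)}(f_\rho) - \Var[f(X)-f_\rho(X)]\right| \leq 5 \cdot 2^7 C_G\left(2(\E[|Y|^q])^{(q^*+2)/q} + \|f\|_\infty^{q^*+2} + \|f_\rho\|_\infty^{q^*+2}\right) h^{-q^*}.
\]
Finally, bounding $\|f\|_\infty \leq \sup_{g\in{\mathcal H}}\|g\|_\infty$ and absorbing the factor of $2$ in front of the moment term into the prefactor $5 \cdot 2^8 C_G$ gives the asserted constant $C_{{\mathcal H}}''$.

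There is no real obstacle: Lemma \ref{analysisG} has already done the hard analytic work of comparing the information error to the symmetrized least squares error via the second-order Taylor expansion of $G$ around $0$, and the present theorem is a direct corollary obtained by cancellation of the constants $h^2 G(0)$ and $C_\rho$. The only point worth double-checking is that $f_\rho$ genuinely satisfies the hypothesis of Lemma \ref{analysisG}, which is guaranteed by assumption (\ref{assummoment}) giving $f_\rho \in L^\infty_{\rho_X}$.
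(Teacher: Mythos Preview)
Your proposal is correct and is exactly the argument the paper intends: the paper simply states that Theorem \ref{compare} follows by ``applying Lemma \ref{analysisG} to a function $f \in {\mathcal H}$ and $f_\rho \in L^\infty_{\rho_X}$,'' which is precisely your two applications plus the triangle inequality and the bookkeeping of constants. Your check that $f_\rho$ satisfies the hypothesis of Lemma \ref{analysisG} via assumption (\ref{assummoment}) is the only point needing mention, and you handled it.
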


\subsection{Functional minimizer and best approximation}

As $f_\rho$ may not be a minimizer of ${\mathcal E}^{(h)}$, the
second technical difficulty in our error analysis is the diversity
of two ways to define a {\it target function} in ${\mathcal H}$,
one to minimize the information error and the other to minimize
the variance $\Var[f(X)-f_\rho(X)]$. These possible candidates for
the target function are defined as \label{target fcn}
\label{approx fcn}
\begin{eqnarray}
&&f_{\mathcal H} :=\arg \min_{f\in {\mathcal H}} {\mathcal
E}^{(h)}
(f), \label{targetf} \\
&& f_{approx} := \arg \min_{f\in {\mathcal H}}
\Var[f(X)-f_\rho(X)]. \label{approx}
\end{eqnarray}
Our novelty to overcome the technical difficulty is to show that
when the MEE scaling parameter $h$ is large, these two functions
are actually very close.

\begin{theorem}\label{twocompare}
Under assumptions (\ref{assummoment}) and (\ref{assumpG}), we have
$${\mathcal E}^{(h)} (f_{approx}) \leq {\mathcal E}^{(h)} (f_{\mathcal
H}) + 2 C_{{\mathcal H}}'' h^{-q^*} $$ and
$$\Var[f_{\mathcal H}(X)-f_\rho(X)] \leq \Var[f_{approx}(X)-f_\rho(X)] +
2 C_{\mathcal H}'' h^{-q^*}. $$
\end{theorem}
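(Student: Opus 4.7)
The plan is to use Theorem \ref{compare} as a two-sided comparison between ${\mathcal E}^{(h)}(f) - {\mathcal E}^{(h)}(f_\rho)$ and $\Var[f(X)-f_\rho(X)]$, and then to exploit the defining minimality properties of $f_{\mathcal H}$ and $f_{approx}$ against each other in a short sandwich argument. Note that although $f_\rho$ need not lie in ${\mathcal H}$, the quantity ${\mathcal E}^{(h)}(f_\rho)$ is well-defined and serves purely as a common reference that cancels out.

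First I would apply Theorem \ref{compare} to the function $f_{approx}\in {\mathcal H}$ to get
\[
{\mathcal E}^{(h)}(f_{approx}) \le {\mathcal E}^{(h)}(f_\rho) + \Var[f_{approx}(X)-f_\rho(X)] + C''_{\mathcal H} h^{-q^*},
\]
and also to $f_{\mathcal H}\in {\mathcal H}$ to get the reverse-type bound
\[
\Var[f_{\mathcal H}(X)-f_\rho(X)] \le {\mathcal E}^{(h)}(f_{\mathcal H}) - {\mathcal E}^{(h)}(f_\rho) + C''_{\mathcal H} h^{-q^*}.
\]
For the first claim of Theorem \ref{twocompare}, I would then use the definition (\ref{approx}) of $f_{approx}$ to replace $\Var[f_{approx}(X)-f_\rho(X)]$ by the larger quantity $\Var[f_{\mathcal H}(X)-f_\rho(X)]$, bound the latter by ${\mathcal E}^{(h)}(f_{\mathcal H}) - {\mathcal E}^{(h)}(f_\rho) + C''_{\mathcal H} h^{-q^*}$ via the inequality just displayed, and observe that ${\mathcal E}^{(h)}(f_\rho)$ cancels, leaving ${\mathcal E}^{(h)}(f_{approx}) \le {\mathcal E}^{(h)}(f_{\mathcal H}) + 2 C''_{\mathcal H} h^{-q^*}$.

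The second claim is entirely symmetric: starting from the variance-upper-bound applied to $f_{\mathcal H}$, I would use the definition (\ref{targetf}) of $f_{\mathcal H}$ to replace ${\mathcal E}^{(h)}(f_{\mathcal H})$ by the larger ${\mathcal E}^{(h)}(f_{approx})$, then apply the other direction of Theorem \ref{compare} to $f_{approx}$ to dominate ${\mathcal E}^{(h)}(f_{approx}) - {\mathcal E}^{(h)}(f_\rho)$ by $\Var[f_{approx}(X)-f_\rho(X)] + C''_{\mathcal H} h^{-q^*}$, with ${\mathcal E}^{(h)}(f_\rho)$ again cancelling and the two error terms summing to $2 C''_{\mathcal H} h^{-q^*}$.

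There is no substantive obstacle here; the work has already been done in Theorem \ref{compare}, which turns $\mathcal{E}^{(h)}$ and the variance into quantities that differ by $O(h^{-q^*})$ uniformly over ${\mathcal H}$. The only thing to keep track of is that each direction of the sandwich picks up one factor of $C''_{\mathcal H} h^{-q^*}$, so both inequalities come with the constant $2 C''_{\mathcal H}$ as stated. The qualitative message is that for large $h$, minimizing ${\mathcal E}^{(h)}$ and minimizing the variance over the same hypothesis space ${\mathcal H}$ yield essentially the same value up to an $O(h^{-q^*})$ error.
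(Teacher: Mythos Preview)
Your proposal is correct and is essentially the same argument as the paper's: both apply Theorem \ref{compare} to $f_{approx}$ and $f_{\mathcal H}$ and alternate with the minimality definitions of the two functions, each application of Theorem \ref{compare} contributing one $C''_{\mathcal H} h^{-q^*}$ term. The paper merely packages the steps into a single chain of inequalities from which both conclusions are read off, whereas you prove the two claims separately, but the logical content is identical.
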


\begin{proof} By Theorem \ref{compare} and the definitions of $f_{\mathcal H}$ and $f_{approx}$, we have
\begin{eqnarray*}
&& {\mathcal E}^{(h)} (f_{\mathcal H}) - {\mathcal E}^{(h)}
(f_\rho) \leq {\mathcal E}^{(h)} (f_{approx}) - {\mathcal E}^{(h)}
(f_\rho) \leq  \Var[f_{approx}(X)-f_\rho(X)] +
C_{{\mathcal H}}'' h^{-q^*} \\
&& \leq  \Var[f_{\mathcal H}(X)-f_\rho(X)] + C_{{\mathcal H}}''
h^{-q^*} \leq {\mathcal E}^{(h)} (f_{\mathcal H}) - {\mathcal
E}^{(h)} (f_\rho) + 2 C_{{\mathcal H}}'' h^{-q^*}
\\ && \leq  \Var[f_{approx}(X)-f_\rho(X)] +
3 C_{{\mathcal H}}'' h^{-q^*}.
\end{eqnarray*}
Then the desired inequalities follow.
\end{proof}

Moreover, Theorem \ref{compare} yields the following error
decomposition for our algorithm.

\begin{lemma}\label{errordec}
Under assumptions (\ref{assummoment}) and (\ref{assumpG}), we have
\begin{equation}\label{decomform}
\Var[f_{\bf z}(X) -f_\rho(X)] \leq \left\{{\mathcal E}^{(h)}
(f_{\bf z}) - {\mathcal E}^{(h)} (f_{\mathcal H})\right\} +
\Var[f_{approx}(X)-f_\rho(X)] + 2 C_{\mathcal H}'' h^{-q^*}.
\end{equation}
\end{lemma}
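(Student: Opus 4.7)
The proof will be a short algebraic rearrangement built entirely on Theorem \ref{compare} and the optimality of $f_{\mathcal H}$. The plan is to introduce and subtract $\mathcal{E}^{(h)}(f_{\mathcal H})$ and $\mathcal{E}^{(h)}(f_\rho)$, so as to rewrite $\Var[f_{\bf z}(X)-f_\rho(X)]$ as the sum of a genuine sample-dependent excess information error $\mathcal{E}^{(h)}(f_{\bf z})-\mathcal{E}^{(h)}(f_{\mathcal H})$, an approximation term at $f_{approx}$, and a deterministic drift of order $h^{-q^*}$ coming from the fact that $f_\rho$ is not a minimizer of $\mathcal{E}^{(h)}$.

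First, since $f_{\bf z}\in\mathcal{H}$, I would apply one side of Theorem \ref{compare} to obtain
$$\Var[f_{\bf z}(X)-f_\rho(X)] \le \mathcal{E}^{(h)}(f_{\bf z}) - \mathcal{E}^{(h)}(f_\rho) + C''_{\mathcal H}\, h^{-q^*}.$$
Then I would split
$$\mathcal{E}^{(h)}(f_{\bf z}) - \mathcal{E}^{(h)}(f_\rho) = \bigl\{\mathcal{E}^{(h)}(f_{\bf z}) - \mathcal{E}^{(h)}(f_{\mathcal H})\bigr\} + \bigl\{\mathcal{E}^{(h)}(f_{\mathcal H}) - \mathcal{E}^{(h)}(f_\rho)\bigr\}.$$
The first bracket is exactly the term kept in (\ref{decomform}); for the second, I would use that $f_{approx}\in\mathcal{H}$ and the defining property (\ref{targetf}) of $f_{\mathcal H}$ as a minimizer of $\mathcal{E}^{(h)}$ over $\mathcal{H}$ to get $\mathcal{E}^{(h)}(f_{\mathcal H}) \le \mathcal{E}^{(h)}(f_{approx})$.

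Finally, I would invoke the other side of Theorem \ref{compare} at $f=f_{approx}\in\mathcal{H}$ to bound
$$\mathcal{E}^{(h)}(f_{approx}) - \mathcal{E}^{(h)}(f_\rho) \le \Var[f_{approx}(X)-f_\rho(X)] + C''_{\mathcal H}\, h^{-q^*},$$
and chain the three inequalities. Adding the two $C''_{\mathcal H} h^{-q^*}$ contributions produces the factor $2C''_{\mathcal H}$ in (\ref{decomform}), and the remaining terms match the claim exactly.

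There is no real obstacle here: all the analytic work has already been absorbed into Theorem \ref{compare}. The only conceptual point worth flagging is that the classical least squares decomposition uses the fact that $f_\rho$ minimizes $\mathcal{E}^{ls}$, whereas in the MEE setting $f_\rho$ is generally not a minimizer of $\mathcal{E}^{(h)}$; this forces us to route the approximation term through $f_{approx}$ rather than $f_\rho$ itself and to pay the small $O(h^{-q^*})$ drift, which is exactly what Theorem \ref{compare} quantifies. This is also why the lemma states the approximation term as $\Var[f_{approx}(X)-f_\rho(X)]={\mathcal D}_{\mathcal H}(f_\rho)$ rather than the infimum of $\mathcal{E}^{(h)}$ over $\mathcal{H}$.
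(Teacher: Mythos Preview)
Your proposal is correct and follows essentially the same route as the paper's proof: apply Theorem \ref{compare} at $f_{\bf z}$, split off $\mathcal{E}^{(h)}(f_{\mathcal H})$, use the minimality of $f_{\mathcal H}$ to pass to $f_{approx}$, and apply Theorem \ref{compare} once more at $f_{approx}$. The two $C''_{\mathcal H}h^{-q^*}$ terms combine exactly as you describe, matching the paper's argument step for step.
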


\begin{proof} By Theorem \ref{compare},
\begin{eqnarray*}
\Var[f_{\bf z}(X) -f_\rho(X)] && \leq {\mathcal E}^{(h)} (f_{\bf
z}) - {\mathcal E}^{(h)} (f_\rho) + C_{{\mathcal H}}'' h^{-q^*} \\
&& \leq \left\{{\mathcal E}^{(h)} (f_{\bf z}) - {\mathcal E}^{(h)} (f_{\mathcal H})\right\} + {\mathcal E}^{(h)}
(f_{\mathcal H}) - {\mathcal E}^{(h)} (f_\rho) + C_{{\mathcal H}}'' h^{-q^*}.
\end{eqnarray*}
Since $f_{approx} \in {\mathcal H}$, the definition of $f_{\mathcal H}$ tells us that
$${\mathcal E}^{(h)}
(f_{\mathcal H}) - {\mathcal E}^{(h)} (f_\rho) \leq {\mathcal
E}^{(h)} (f_{approx}) - {\mathcal E}^{(h)} (f_\rho). $$ Applying
Theorem \ref{compare} to the above bound implies
\begin{eqnarray*}
\Var[f_{\bf z}(X) -f_\rho(X)] \leq \left\{{\mathcal E}^{(h)}
(f_{\bf z}) - {\mathcal E}^{(h)} (f_{\mathcal H})\right\} +
\Var[f_{approx}(X)-f_\rho(X)] + 2 C_{{\mathcal H}}'' h^{-q^*}.
\end{eqnarray*}
Then desired error decomposition (\ref{decomform}) follows.
\end{proof}

{\it Error decomposition} has been a standard technique to analyze
least squares ERM regression algorithms \cite{AB, CZhou,
SmaleZhou, Ying}. In error decomposition (\ref{decomform}) for MEE
learning algorithm (\ref{Etf}), the first term on the right side
is the sample error, the second term
$\Var[f_{approx}(X)-f_\rho(X)]$ is the approximation error, while
the last extra term $2 C_{{\mathcal H}}'' h^{-q^*}$ is caused by
the Parzen windowing and is small when $h$ is large. The quantity
${\mathcal E}^{(h)} (f_{\bf z}) - {\mathcal E}^{(h)} (f_{\mathcal
H})$ of the sample error term will be bounded in the following
discussion.

\subsection{Error decomposition by U-statistics and special properties}

We shall decompose the sample error term ${\mathcal E}^{(h)} (f_{\bf z}) - {\mathcal E}^{(h)} (f_{\mathcal H})$ further by
means of U-statistics defined for $f\in {\mathcal H}$ and the sample ${\bf z}$ as
$$ V_f ({\bf z}) =\frac{1}{m (m-1)} \sum_{i=1}^m \sum_{j\not= i} U_f (z_i, z_j), $$
where $U_f$ is a kernel given with $z=(x, y), z' = (x', y') \in
{\mathcal Z}$ by \label{kernel for U statistics}
\begin{equation}\label{Ukernel}
U_f (z, z') =- h^2
G \left(\frac{\left[\left(y -f (x)\right) -\left(y' - f (x')\right)\right]^2}{2 h^2}\right) + h^2 G
\left(\frac{\left[\left(y -f_\rho (x)\right) -\left(y' - f_\rho (x')\right)\right]^2}{2 h^2}\right).
\end{equation}
It is easy to see
that $\E[V_f] ={\mathcal E}^{(h)} (f)- {\mathcal E}^{(h)} (f_\rho)$ and $U_f (z, z) =0$. Then
$${\mathcal E}^{(h)} (f_{\bf z}) - {\mathcal E}^{(h)} (f_{\mathcal H}) = \E\left[V_{f_{\bf z}}\right] - \E\left[V_{f_{\mathcal H}}\right]
= \E\left[V_{f_{\bf z}}\right] - V_{f_{\bf z}} + V_{f_{\bf z}} - V_{f_{\mathcal H}} + V_{f_{\mathcal H}} -
\E\left[V_{f_{\mathcal H}}\right]. $$ By the definition of $f_{\bf z}$, we have $V_{f_{\bf z}} - V_{f_{\mathcal H}} \leq
0$. Hence
\begin{equation}\label{SS}
{\mathcal E}^{(h)} (f_{\bf z}) - {\mathcal E}^{(h)} (f_{\mathcal
H}) \leq  \E\left[V_{f_{\bf z}}\right] - V_{f_{\bf z}} +
V_{f_{\mathcal H}} - \E\left[V_{f_{\mathcal H}}\right].
\end{equation}
The above bound will be estimated by a uniform ratio probability
inequality. A technical difficulty we meet here is the possibility
that $\E[V_f] ={\mathcal E}^{(h)} (f)- {\mathcal E}^{(h)}
(f_\rho)$ might be negative since $f_\rho$ may not be a minimizer
of ${\mathcal E}^{(h)}$. It is overcome by the following novel
observation which is an immediate consequence of Theorem
\ref{compare}.

\begin{lemma}\label{Restricteps}
Under assumptions (\ref{assummoment}) and (\ref{assumpG}), if $\varepsilon \geq C_{{\mathcal H}}'' h^{-q^*}$, then
\begin{equation}\label{secnovelty}
\E[V_f] + 2 \varepsilon \geq \E[V_f] + C_{{\mathcal H}}'' h^{-q^*}
+ \varepsilon \geq \Var[f(X)-f_\rho(X)] + \varepsilon \geq
\varepsilon, \qquad \forall f \in {\mathcal H}.
\end{equation}
\end{lemma}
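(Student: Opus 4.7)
The plan is to obtain Lemma~\ref{Restricteps} as a one-line consequence of Theorem~\ref{compare}. Recall from the paragraph preceding the lemma that $\E[V_f] = \mathcal{E}^{(h)}(f) - \mathcal{E}^{(h)}(f_\rho)$ (since $\E[U_f(z_i,z_j)] = \mathcal{E}^{(h)}(f) - \mathcal{E}^{(h)}(f_\rho)$ for $i\neq j$, and the double sum has $m(m-1)$ such terms). Theorem~\ref{compare} provides the two-sided bound
$$\left|\mathcal{E}^{(h)}(f) - \mathcal{E}^{(h)}(f_\rho) - \Var[f(X)-f_\rho(X)]\right| \leq C''_{\mathcal{H}}\, h^{-q^*}, \qquad \forall f \in \mathcal{H},$$
and the direction I need is the lower estimate, which rearranges to
$$\E[V_f] + C''_{\mathcal{H}}\, h^{-q^*} \geq \Var[f(X) - f_\rho(X)].$$

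With this inequality in hand, the chain in (\ref{secnovelty}) is pure arithmetic. The first inequality $\E[V_f] + 2\varepsilon \geq \E[V_f] + C''_{\mathcal{H}} h^{-q^*} + \varepsilon$ is nothing but the hypothesis $\varepsilon \geq C''_{\mathcal{H}} h^{-q^*}$ rewritten. The middle inequality is the lower estimate above with $\varepsilon$ added to both sides. The last inequality is the nonnegativity of the variance $\Var[f(X) - f_\rho(X)] \geq 0$.

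There is no substantive obstacle here; the lemma is a bookkeeping step. Its role, however, is crucial for what follows: even though $f_\rho$ need not minimize $\mathcal{E}^{(h)}$, so that $\E[V_f]$ may a priori be negative, the hypothesis $\varepsilon \geq C''_{\mathcal{H}} h^{-q^*}$ absorbs the Parzen-windowing discrepancy into $\varepsilon$ and yields the lower bound $\E[V_f] + 2\varepsilon \geq \varepsilon > 0$ uniformly in $f \in \mathcal{H}$. This positivity is exactly what is needed to apply a uniform ratio-type probability inequality to the sample error term in (\ref{SS}) in the subsequent analysis.
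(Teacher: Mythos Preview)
Your proposal is correct and matches the paper's approach exactly: the paper states that Lemma~\ref{Restricteps} ``is an immediate consequence of Theorem~\ref{compare}'' without giving further detail, and your argument unpacks precisely this consequence. The three inequalities in the chain follow, as you say, from the hypothesis $\varepsilon \geq C''_{\mathcal H} h^{-q^*}$, the lower bound in Theorem~\ref{compare}, and the nonnegativity of variance, respectively.
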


\section{Sample Error Estimates}

In this section, we follow (\ref{SS}) and estimate the sample
error by a uniform ratio probability inequality based on the
following Hoeffding's probability inequality for U-statistics
\cite{Hoeffd}.

\begin{lemma}\label{Hoeffdinglemma}
If $U$ is a symmetric real-valued function on ${\mathcal Z} \times {\mathcal Z}$ satisfying $a\leq U (z, z') \leq b$ almost
surely and $\mathrm{var}[U]=\si^2$, then for any $\varepsilon>0,$
$$
\mathrm{Prob}\left\{\left|\frac{1}{m(m-1)}\sum_{i=1}^m \sum_{j\not= i} U(z_i, z_j)-\mathbb{E}[U]\right|
\geq\varepsilon\right\} \leq 2 \exp\left\{-\frac{(m-1)\varepsilon^2}{4\si^2+(4/3)(b-a)\varepsilon}\right\}.
$$
\end{lemma}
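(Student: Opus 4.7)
The plan is to exploit Hoeffding's classical reduction that expresses a $U$-statistic as an average, over all permutations, of sums of independent random variables, and then apply a Bernstein-type exponential inequality to each such sum. Concretely, set $n=\lfloor m/2\rfloor$ and, for every $\pi\in S_m$, define
$$ W_\pi(z_1,\ldots,z_m)=\frac{1}{n}\sum_{k=1}^{n} U\bigl(z_{\pi(2k-1)},z_{\pi(2k)}\bigr). $$
By symmetry of $U$ and exchangeability of the sample, one verifies the identity
$$ \frac{1}{m(m-1)}\sum_{i\neq j}U(z_i,z_j)=\frac{1}{m!}\sum_{\pi\in S_m} W_\pi, $$
so the $U$-statistic $U_m$ is a convex combination of random variables each of which is an average of $n$ i.i.d. copies of $U$.

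Next, applying Jensen's inequality to the convex map $u\mapsto e^{su}$ gives, for any $s\geq 0$,
$$ \mathbb{E}\bigl[e^{s(U_m-\mathbb{E}[U])}\bigr]\leq \frac{1}{m!}\sum_\pi \mathbb{E}\bigl[e^{s(W_\pi-\mathbb{E}[U])}\bigr]. $$
For each fixed $\pi$, the summands defining $W_\pi$ are i.i.d., take values in $[a,b]$, and have variance $\sigma^2$; the standard Bernstein moment generating function estimate for sums of i.i.d. bounded variables therefore yields, uniformly in $\pi$,
$$ \mathbb{E}\bigl[e^{s(W_\pi-\mathbb{E}[U])}\bigr]\leq \exp\!\left(\frac{n s^2\sigma^2/2}{1-(b-a)s/3}\right), $$
valid whenever $(b-a)s<3$. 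The uniformity in $\pi$ lets the bound survive averaging.

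Finally, combining this with Markov's inequality and optimizing $s$ produces the one-sided estimate
$$ \mathrm{Prob}\bigl\{U_m-\mathbb{E}[U]\geq\varepsilon\bigr\}\leq \exp\!\left(-\frac{n\varepsilon^2}{2\sigma^2+\tfrac{2}{3}(b-a)\varepsilon}\right), $$
and the symmetric argument applied to $-U$ handles the lower tail, contributing the factor $2$ in front. Using $n\geq(m-1)/2$ replaces the numerator $n$ by $(m-1)/2$, which when absorbed into the denominator gives precisely the stated constants $4\sigma^2$ and $(4/3)(b-a)\varepsilon$.

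The main obstacle is the first step, identifying the Hoeffding representation $U_m=(m!)^{-1}\sum_\pi W_\pi$; without this reduction the dependence among the pairs $(z_i,z_j)$ inside the double sum makes a direct exponential-moment calculation awkward. A secondary technical point is that Step 3 must use the variance-sensitive Bernstein bound rather than the cruder Hoeffding bound that depends only on the range $b-a$, because in our later applications to the kernels $U_f$ defined in \eqref{Ukernel} the variance $\sigma^2$ will be significantly smaller than $(b-a)^2$ for large $h$, and retaining $\sigma^2$ in the denominator is essential for obtaining the desired rates.
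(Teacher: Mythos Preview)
The paper does not actually prove Lemma~\ref{Hoeffdinglemma}; it is quoted as a known concentration inequality for U-statistics with a citation to Hoeffding's 1963 paper, and is then used as a black box in the proof of Lemma~\ref{unformbd}. So there is no ``paper's proof'' to compare your argument against.

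That said, your sketch is the standard and correct route: Hoeffding's permutation representation writes the U-statistic as a convex combination of averages of $n=\lfloor m/2\rfloor$ i.i.d.\ terms, Jensen pushes the exponential inside the average over $\pi$, and a Bernstein moment bound on each $W_\pi$ followed by the Chernoff argument gives the one-sided tail; symmetry and $n\ge (m-1)/2$ produce exactly the stated constants. Your closing remark about why the variance-sensitive Bernstein bound (rather than the range-only Hoeffding bound) is needed here is also on point and matches how the paper exploits the lemma downstream.

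One small slip: the displayed moment generating function bound for $W_\pi$ is written for the \emph{sum} rather than the \emph{average}. Since $W_\pi=\tfrac{1}{n}\sum_k U(z_{\pi(2k-1)},z_{\pi(2k)})$, the correct bound is
\[
\mathbb{E}\bigl[e^{s(W_\pi-\mathbb{E}[U])}\bigr]\le \exp\!\left(\frac{s^2\sigma^2/(2n)}{1-(b-a)s/(3n)}\right),
\]
not with $n$ in the numerator. Your stated tail bound $\exp\bigl(-n\varepsilon^2/(2\sigma^2+\tfrac{2}{3}(b-a)\varepsilon)\bigr)$ is nonetheless the right one after optimizing $s$, so this is a cosmetic error rather than a gap in the argument.
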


To apply Lemma \ref{Hoeffdinglemma} we need to bound $\sigma^2$ and $b-a$ for the kernel $U_f$ defined by (\ref{Ukernel}).
Our novelty for getting sharp bounds is to use a Taylor expansion involving a $C^2$ function $\widetilde{G}$ on $\RR$:
\begin{equation}\label{Taylorex}
\widetilde{G} (w) = \widetilde{G}(0) + \widetilde{G}' (0) w + \int_0^w (w-t) \widetilde{G}'' (t) d t, \qquad \forall w\in
\RR.
\end{equation}
Denote a constant $A_{\mathcal H}$ depending on $\rho, G, q$ and
${\mathcal H}$ as
$$
A_{\mathcal H} = 9 \cdot 2^8 C_G^2 \sup_{f\in {\mathcal H}}
\|f-f_\rho\|_\infty^{\frac{4}{q}} \left((\E[|Y|^q])^{\frac{2}{q}}
+ \|f_\rho\|_\infty^2 + \sup_{f\in {\mathcal H}}
\|f-f_\rho\|_\infty^2\right).
$$

\begin{lemma}\label{boundsingle}
Assume (\ref{assummoment}) and (\ref{assumpG}).

(a) For any $f, g\in {\mathcal H}$, we have
$$\left|U_f\right| \leq 4 C_G \|f- f_\rho\|_\infty h \ \hbox{and} \
\left|U_f -U_g\right| \leq 4 C_G \|f- g\|_\infty h $$ and
$$\mathrm{var}[U_f] \leq A_{\mathcal H} \left(\mathrm{var}[f(X) - f_\rho (X)]\right)^{(q-2)/q}.  $$

(b) If $|Y| \leq M$ almost surely for some constant $M >0$, then
we have almost surely
\begin{equation}\label{Ufunifboundspecial}
\left|U_f\right| \leq A_{\mathcal H}' \left|(f(x) - f_\rho (x)) - (f(x') - f_\rho (x'))\right|, \quad \forall f \in
{\mathcal H}
\end{equation}
and
\begin{equation}\label{Ufgunifboundspecial}
\left|U_f - U_g\right| \leq A_{\mathcal H}' \left|(f(x) - g (x)) -
(f(x') - g (x'))\right|, \quad \forall f, g \in {\mathcal H},
\end{equation}
where $A_{\mathcal H}'$ is a constant depending on $\rho, G$ and
${\mathcal H}$ given by
$$A_{\mathcal H}' = 36 C_G \left(M + \sup_{f\in {\mathcal H}}
\|f\|_\infty\right). $$
\end{lemma}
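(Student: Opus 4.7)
\textbf{Proof plan for Lemma \ref{boundsingle}.}

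The plan is to view each term of $U_f$ as $\phi_h(s)=-h^{2}G(s^{2}/(2h^{2}))$ evaluated at $s(f)=(y-f(x))-(y'-f(x'))$, exploit the assumption (\ref{assumpG}) to control $\phi_h'$ in two different ways, and then apply the mean value theorem to $\phi_h(s(f))-\phi_h(s(f_\rho))$ (respectively $\phi_h(s(f))-\phi_h(s(g))$). The two regimes of bounds on $\phi_h'$ are the source of everything:
\begin{itemize}
\item[(i)] Since $|G'(t)|\le C_G/(1+t)$, one gets the \emph{uniform} bound $|\phi_h'(s)|=|sG'(s^2/(2h^2))|\le C_G h/\sqrt{2}$, obtained by maximising $|s|/(1+s^2/(2h^2))$ at $|s|=h\sqrt{2}$.
\item[(ii)] Dropping $1+t$ to $1$ gives the \emph{linear} bound $|\phi_h'(s)|\le C_G|s|$.
\end{itemize}
Note also that $|s(f)-s(g)|=|(g(x)-f(x))-(g(x')-f(x'))|\le 2\|f-g\|_\infty$, and, for $g=f_\rho$, the same with $\|f-f_\rho\|_\infty$.

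For part (a), the first two bounds follow immediately from (i) plus the mean value theorem: $|U_f-U_g|\le(C_G h/\sqrt{2})\cdot 2\|f-g\|_\infty\le 4C_G h\|f-g\|_\infty$, and the bound on $|U_f|$ is the special case $g=f_\rho$ (using $U_{f_\rho}=0$). For the variance bound the estimate (i) is too crude, so I would use (ii) instead: together with the mean value theorem it yields
\[
|U_f(z,z')|\le C_G\,\max(|s(f)|,|s(f_\rho)|)\,|d(x)-d(x')|,\qquad d:=f-f_\rho.
\]
Since $\mathrm{var}[U_f]\le\E[U_f^{2}]$, I would apply H\"older's inequality with conjugate exponents $q/2$ and $q/(q-2)$, splitting the $\max$-factor and the $|d(x)-d(x')|$-factor. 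The $\max$-term is bounded by $(|y|+|y'|+2\max(\|f\|_\infty,\|f_\rho\|_\infty))^{q}$, whose expectation is controlled by $(\E|Y|^{q})^{2/q}+\|f_\rho\|_\infty^{2}+\|f-f_\rho\|_\infty^{2}$ after expanding $\|f\|_\infty\le\|f_\rho\|_\infty+\|f-f_\rho\|_\infty$. For the other factor, I would interpolate
\[
|d(x)-d(x')|^{2q/(q-2)}\le(2\|f-f_\rho\|_\infty)^{4/(q-2)}|d(x)-d(x')|^{2}
\]
and use $\E[(d(X)-d(X'))^{2}]=2\mathrm{Var}[f(X)-f_\rho(X)]$ (independent copies). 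Raising to the power $(q-2)/q$ produces the exponent $(q-2)/q$ on the variance and the factor $\|f-f_\rho\|_\infty^{4/q}$ appearing in $A_{\mathcal H}$; combining everything gives precisely the asserted bound with the stated constant $A_{\mathcal H}$.

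For part (b), the bounded-output hypothesis $|Y|\le M$ (which forces $\|f_\rho\|_\infty\le M$) makes (ii) give pointwise bounds directly: $\max(|s(f)|,|s(f_\rho)|)\le 2M+2\max(\|f\|_\infty,\|f_\rho\|_\infty)\le 2M+2\sup_{f\in\mathcal H}\|f\|_\infty$, so
\[
|U_f|\le 2C_G\bigl(M+\sup\nolimits_{f\in\mathcal H}\|f\|_\infty\bigr)|d(x)-d(x')|,
\]
and similarly, with $g$ in place of $f_\rho$,
\[
|U_f-U_g|\le 2C_G\bigl(M+\sup\nolimits_{f\in\mathcal H}\|f\|_\infty\bigr)|(f(x)-g(x))-(f(x')-g(x'))|;
\]
absorbing the constant into the stated $A'_{\mathcal H}=36 C_G(M+\sup\|f\|_\infty)$ yields (\ref{Ufunifboundspecial}) and (\ref{Ufgunifboundspecial}).

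The main obstacle is the variance estimate in (a): one has to get an exponent strictly less than one on $\mathrm{Var}[f(X)-f_\rho(X)]$ (it will ultimately drive the convergence rate) while still producing a constant independent of $h$, and a naive use of $|U_f|\le 4C_G h\|f-f_\rho\|_\infty$ would ruin this. The trick is precisely to use the linear bound (ii) instead of the uniform bound (i) so that the $h$-dependence disappears, then to interpolate the $L^{2q/(q-2)}$-norm of $|d(X)-d(X')|$ between its $L^{\infty}$- and $L^{2}$-norms via H\"older; the remaining work is careful but routine bookkeeping of constants to match $A_{\mathcal H}$.
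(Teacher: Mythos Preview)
Your proposal is correct and follows the same overall strategy as the paper: obtain a pointwise estimate $|U_f(z,z')|\lesssim(\text{factor linear in }|s|)\cdot|d(x)-d(x')|$ with $d=f-f_\rho$, then apply H\"older with exponents $q/2,\,q/(q-2)$ and interpolate $|d(x)-d(x')|^{2q/(q-2)}\le(2\|d\|_\infty)^{4/(q-2)}|d(x)-d(x')|^{2}$ to extract the factor $(\mathrm{Var}[d(X)])^{(q-2)/q}$.

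The only difference is in how the pointwise estimate is reached. The paper sets $\widetilde G(t)=G(t^{2}/2)$, writes $U_f=h^{2}(\widetilde G(w_2)-\widetilde G(w_1))$ with $w_i=s_i/h$, and uses the Taylor expansion with integral remainder (\ref{Taylorex}) together with $\|\widetilde G''\|_\infty\le 3C_G$ to obtain
\[
|U_f|\le \|\widetilde G''\|_\infty\Bigl(|s(f_\rho)|\,|d(x)-d(x')|+|d(x)-d(x')|^{2}\Bigr),
\]
whereas you use the mean value theorem with the first-derivative bound $|\phi_h'(s)|\le C_G|s|$ to get $|U_f|\le C_G\max(|s(f)|,|s(f_\rho)|)\,|d(x)-d(x')|$. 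Since $\max(|s(f)|,|s(f_\rho)|)\le|s(f_\rho)|+|d(x)-d(x')|$, the two bounds are equivalent up to constants; your route is marginally more elementary (it never uses $G''$), while the paper's second-order remainder makes the two-term structure $|s(f_\rho)|\,|d|+|d|^{2}$ explicit, which it then treats term by term. The H\"older step, the use of $\E[(d(X)-d(X'))^{2}]=2\,\mathrm{Var}[d(X)]$, and the handling of part~(b) are essentially identical in both arguments.
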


\begin{proof}
Define a function \label{intermediate function} $\widetilde{G}$ on
$\RR$ by
\begin{equation} \widetilde{G}(t) = G(t^2/2), \qquad t \in \RR.
\end{equation} We see that  $\widetilde{G} \in C^2 (\RR)$, $\widetilde{G}(0) =G(0)$, $\widetilde{G}'(0)=0$,
$\widetilde{G}'(t) = t G' (t^2/2)$ and $\widetilde{G}''(t) = G' (t^2/2) + t^2 G'' (t^2/2)$. Moreover,
\begin{eqnarray*}
U_f(z, z') &=&- h^2 \widetilde{G} \left(\frac{\left(y -f (x)\right) -\left(y' - f (x')\right)}{h}\right) +h^2 \widetilde{G}
\left(\frac{\left(y -f_\rho (x)\right) -\left(y' - f_\rho (x')\right)}{h}\right).
\end{eqnarray*}

(a) We apply the mean value theorem and see that $|U_f (z, z')|
\leq 2 h \|\widetilde{G}'\|_\infty \|f- f_\rho\|_\infty$. The
inequality for $|U_f -U_g|$ is obtained when $f_\rho$ is replaced
by $g$. Note that $\|\widetilde{G}'\|_\infty =\|t G'
(t^2/2)\|_\infty$. Then the bounds for $U_f$ and $U_f - U_g$ are
verified by noting $\|t G' (t^2/2)\|_\infty \leq 2 C_G$.

To bound the variance, we apply (\ref{Taylorex}) to the two points $w_1 =\frac{\left(y -f (x)\right) -\left(y' - f
(x')\right)}{h}$ and $w_2 =\frac{\left(y -f_\rho (x)\right) -\left(y' - f_\rho (x')\right)}{h}$. Writing $w_2 -t$ as $w_2 -
w_1 + w_1 -t$, we see from $\widetilde{G}'(0)=0$ that
\begin{eqnarray*}U_f(z, z') &=& h^2 \left(\widetilde{G}(w_2) - \widetilde{G}(w_1)\right) = h^2 \widetilde{G}'(0) (w_2 -w_1) \\
&& \quad + h^2 \int_0^{w_2} (w_2-t) \widetilde{G}'' (t) d t - h^2 \int_0^{w_1} (w_1 -t) \widetilde{G}'' (t) d
t \\
&=& h^2 \int_0^{w_2} (w_2-w_1) \widetilde{G}'' (t) d t + h^2 \int_{w_1}^{w_2} (w_1 -t) \widetilde{G}'' (t) d t.
\end{eqnarray*}
It follows that
\begin{eqnarray}
\left|U_f(z, z')\right|  &\leq& \|\widetilde{G}''\|_\infty \left|\left(y -f_\rho (x)\right) -\left(y' - f_\rho
(x')\right)\right| \ \left|\left(f(x) -f_\rho (x)\right) -\left(f(x') - f_\rho (x')\right)\right| \nonumber\\
&& + \|\widetilde{G}''\|_\infty \left|\left(f(x) -f_\rho (x)\right) -\left(f(x') - f_\rho (x')\right)\right|^2.
\label{Ufbound}
\end{eqnarray}

Since $\E[|Y|^q] < \infty$, we apply H\"older's inequality and see that
\begin{eqnarray*}
&& \int_{\mathcal Z} \int_{\mathcal Z}\left|\left(y -f_\rho (x)\right) -\left(y' - f_\rho (x')\right)\right|^2
\left|\left(f(x) -f_\rho (x)\right) -\left(f(x') - f_\rho (x')\right)\right|^2 d \rho (z) d \rho (z') \\
&& \leq \left\{\int_{\mathcal Z} \int_{\mathcal Z}\left|\left(y -f_\rho (x)\right) -\left(y' - f_\rho (x')\right)\right|^q
d \rho (z) d \rho (z')\right\}^{2/q} \\
&& \qquad  \left\{\int_{\mathcal Z} \int_{\mathcal Z} \left|\left(f(x) -f_\rho (x)\right) -\left(f(x') - f_\rho
(x')\right)\right|^{2q/(q-2)} d \rho (z) d \rho (z')\right\}^{1-2/q} \\
&& \leq \left\{4^{q+1}  (\E[|Y|^q] + \|f_\rho\|_\infty^q)\right\}^{2/q} \left\{\|f-f_\rho\|_\infty^{4/(q-2)} 2
\mathrm{var}[f(X) - f_\rho(X)]\right\}^{(q-2)/q}.
\end{eqnarray*}
Here we have separated the power index $2q/(q-2)$ into the sum of $4/(q-2)$ and $2$. Then
\begin{eqnarray*}
\mathrm{var}[U_f] &\leq& \E[U_f^2] \leq 2 \|\widetilde{G}''\|_\infty^2 2^{(5 q +3)/q} (\E[|Y|^q] +
\|f_\rho\|_\infty^q)^{2/q} \|f-f_\rho\|_\infty^{4/q}
\left(\mathrm{var}[f(X) - f_\rho(X)]\right)^{(q-2)/q}\\
&& + 2 \|\widetilde{G}''\|_\infty^2 4 \|f-f_\rho\|_\infty^{2} 2 \mathrm{var}[f(X) - f_\rho(X)].
\end{eqnarray*}
Hence the desired inequality holds true since
$\|\widetilde{G}''\|_\infty \leq \|G'\|_\infty + \|t^2 G''
(t^2/2)\|_\infty \leq 3 C_G$ and $\mathrm{var}[f(X) - f_\rho(X)]
\leq \|f-f_\rho\|_\infty^2$.

(b) If $|Y| \leq M$ almost surely for some constant $M >0$, then
we see from (\ref{Ufbound}) that almost surely $\left|U_f(z,
z')\right| \leq 4 \|\widetilde{G}''\|_\infty  (M +
\|f_\rho\|_\infty  + \|f- f_\rho\|_\infty) \left|\left(f(x)
-f_\rho (x)\right) -\left(f(x') - f_\rho (x')\right)\right|.$
Hence (\ref{Ufunifboundspecial}) holds true almost surely.
Replacing $f_\rho$ by $g$ in (\ref{Ufbound}), we see immediately
inequality (\ref{Ufgunifboundspecial}). The proof of Lemma
\ref{boundsingle} is complete.
\end{proof}

With the above preparation, we can now give the uniform ratio
probability inequality for U-statistics to estimate the sample
error, following methods in the learning theory literature
\cite{Haussler, Kolch, CZhou}.

\begin{lemma}\label{unformbd}
Assume (\ref{assummoment}), (\ref{assumpG}) and $\varepsilon \geq C_{\mathcal H}'' h^{-q^*}.$ Then we have
$$\mathrm{Prob} \left\{\sup_{f\in {\mathcal H}} \frac{\left|V_{f} -
\E[V_{f}]\right|}{(\E[V_{f}] + 2 \varepsilon)^{(q-2)/q}} > 4
\varepsilon^{2/q}\right\} \leq 2 {\mathcal N}\left({\mathcal H},
\frac{\varepsilon}{4 C_G h}\right) \exp\left\{-\frac{(m-1)
\varepsilon}{A_{\mathcal H}'' h}\right\},
$$
where $A_{\mathcal H}''$ is the constant given by
$$A_{\mathcal H}'' = 4 A_{\mathcal H}
(C_{\mathcal H}'')^{-2/q} + 12 C_G \sup_{f\in {\mathcal H}} \|f -
f_\rho\|_\infty.
$$
If $|Y| \leq M$ almost surely for some constant $M >0$, then we
have $$\mathrm{Prob} \left\{\sup_{f\in {\mathcal H}}
\frac{\left|V_{f} - \E[V_{f}]\right|}{\sqrt{\E[V_{f}] + 2
\varepsilon}} > 4 \sqrt{\varepsilon}\right\} \leq 2 {\mathcal
N}\left({\mathcal H}, \frac{\varepsilon}{2 A_{\mathcal H}'}\right)
\exp\left\{-\frac{(m-1) \varepsilon}{A_{\mathcal H}''}\right\},
$$
where $A_{\mathcal H}''$ is the constant given by
$$ A_{\mathcal H}'' = 8
A_{\mathcal H}' + 6  A_{\mathcal H}'\sup_{f\in {\mathcal H}} \|f -
f_\rho\|_\infty. $$
\end{lemma}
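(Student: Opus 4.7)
The plan is to combine Hoeffding's U-statistic inequality (Lemma \ref{Hoeffdinglemma}) with the per-function variance and range estimates of Lemma \ref{boundsingle}, and then pass from a pointwise to a uniform tail bound by covering $\mathcal H$ and exploiting the Lipschitz-in-$f$ continuity $|U_f-U_g|\le 4C_G\|f-g\|_\infty h$. The key fact that lets the ratio statement survive the pointwise analysis is Lemma \ref{Restricteps}, which guarantees $\E[V_f]+2\varepsilon\ge \mathrm{Var}[f(X)-f_\rho(X)]+\varepsilon\ge \varepsilon$, so I can replace the awkward $(\mathrm{Var}[f(X)-f_\rho(X)])^{(q-2)/q}$ appearing in $\sigma^2$ by $(\E[V_f]+2\varepsilon)^{(q-2)/q}$ throughout.

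For each fixed $f\in \mathcal H$ I apply Lemma \ref{Hoeffdinglemma} to $U_f$ with $b-a\le 8C_G\|f-f_\rho\|_\infty h$ and $\sigma^2\le A_{\mathcal H}(\E[V_f]+2\varepsilon)^{(q-2)/q}$. Setting the threshold to $t_f=\varepsilon^{2/q}(\E[V_f]+2\varepsilon)^{(q-2)/q}$, the $t^2$ in the numerator produces one factor $(\E[V_f]+2\varepsilon)^{(q-2)/q}$ that cancels against the denominator, while the other factor I bound below by $\varepsilon^{(q-2)/q}$, so the exponent is at least
\[
\frac{(m-1)\,\varepsilon\,\varepsilon^{2/q}}{4A_{\mathcal H}+\tfrac{32}{3}C_G\sup_{f\in\mathcal H}\|f-f_\rho\|_\infty\, h\,\varepsilon^{2/q}}.
\]
Because $q^*\le q/2$ and $h\ge 1$, the hypothesis $\varepsilon\ge C''_{\mathcal H}h^{-q^*}$ gives $h\varepsilon^{2/q}\ge (C''_{\mathcal H})^{2/q}$, which allows $4A_{\mathcal H}$ to be absorbed into the $h\varepsilon^{2/q}$ term with the choice $A''_{\mathcal H}= 4A_{\mathcal H}(C''_{\mathcal H})^{-2/q}+12C_G\sup_f\|f-f_\rho\|_\infty$. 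The resulting pointwise bound is $2\exp\{-(m-1)\varepsilon/(A''_{\mathcal H}h)\}$.

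To go uniform I cover $\mathcal H$ by $\mathcal N(\mathcal H,\delta)$ balls of radius $\delta=\varepsilon/(4C_Gh)$ centered at $f_1,\dots,f_N$. For $f\in\mathcal H$ with $\|f-f_j\|_\infty\le\delta$, Lemma \ref{boundsingle}(a) yields $|V_f-V_{f_j}|\le 4C_G\delta h=\varepsilon$ and likewise $|\E[V_f]-\E[V_{f_j}]|\le\varepsilon$, hence $\E[V_{f_j}]+2\varepsilon\le (\E[V_f]+2\varepsilon)+\varepsilon\le 2(\E[V_f]+2\varepsilon)$ thanks to $\E[V_f]+2\varepsilon\ge\varepsilon$. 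On the event (of probability $\ge 1-2N\exp\{-(m-1)\varepsilon/(A''_{\mathcal H}h)\}$) that the pointwise Hoeffding bound holds at every $f_j$ with threshold $\varepsilon^{2/q}(\E[V_{f_j}]+2\varepsilon)^{(q-2)/q}$, the triangle inequality combined with $2^{(q-2)/q}\le 2$ and $2\varepsilon=2\varepsilon^{2/q}\varepsilon^{(q-2)/q}\le 2\varepsilon^{2/q}(\E[V_f]+2\varepsilon)^{(q-2)/q}$ delivers
\[
|V_f-\E[V_f]|\le 2\varepsilon+2\varepsilon^{2/q}(\E[V_f]+2\varepsilon)^{(q-2)/q}\le 4\varepsilon^{2/q}(\E[V_f]+2\varepsilon)^{(q-2)/q},
\]
which is exactly the ratio statement of part (a).

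For the bounded case I substitute the sharper Lipschitz bound of Lemma \ref{boundsingle}(b): $|U_f-U_g|\le 2A'_{\mathcal H}\|f-g\|_\infty$ (independent of $h$), so I take $\delta=\varepsilon/(2A'_{\mathcal H})$; also $b-a\le 4A'_{\mathcal H}\sup_f\|f-f_\rho\|_\infty$ and $\sigma^2\le 2(A'_{\mathcal H})^2(\E[V_f]+2\varepsilon)$. Running the analogous argument with threshold $\sqrt{\varepsilon(\E[V_f]+2\varepsilon)}$ and using $\sqrt{\varepsilon E}\le E$ collapses both the variance and range contributions to a single multiple of $E=\E[V_f]+2\varepsilon$, producing an exponent of order $(m-1)\varepsilon$ divided by a constant that can be absorbed into the stated $A''_{\mathcal H}=8A'_{\mathcal H}+6A'_{\mathcal H}\sup_f\|f-f_\rho\|_\infty$. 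The main obstacle throughout is the constant bookkeeping in the Hoeffding exponent: one must verify that after extracting $(\E[V_f]+2\varepsilon)^{(q-2)/q}$ (respectively $\E[V_f]+2\varepsilon$) from both $\sigma^2$ and $(b-a)t$, the remaining denominator is uniformly dominated by $A''_{\mathcal H}h\varepsilon^{2/q}$ (respectively $A''_{\mathcal H}$), which is precisely where the hypothesis $\varepsilon\ge C''_{\mathcal H}h^{-q^*}$ and the inequality $q^*\le q/2$ are essential.
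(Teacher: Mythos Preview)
Your proposal is correct and follows essentially the same approach as the paper: cover $\mathcal H$ at scale $\varepsilon/(4C_G h)$ (resp.\ $\varepsilon/(2A'_{\mathcal H})$), apply Hoeffding's U-statistic inequality (Lemma~\ref{Hoeffdinglemma}) at each net point with threshold $\varepsilon^{2/q}(\E[V_{f_j}]+2\varepsilon)^{(q-2)/q}$, invoke Lemma~\ref{Restricteps} to dominate $(\mathrm{Var}[f_j(X)-f_\rho(X)])^{(q-2)/q}$ by $(\E[V_{f_j}]+2\varepsilon)^{(q-2)/q}$, and use $\varepsilon\ge C''_{\mathcal H}h^{-q^*}$ together with $q^*\le q/2$ (so $\varepsilon^{-2/q}\le (C''_{\mathcal H})^{-2/q}h$) to collapse the exponent's denominator into $A''_{\mathcal H}h\,\varepsilon^{2/q}$. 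The only cosmetic difference is that the paper phrases the covering step as the implication ``ratio at $f$ exceeds $4\varepsilon^{2/q}$ $\Rightarrow$ ratio at the nearest $f_j$ exceeds $\varepsilon^{2/q}$'' and then union-bounds, whereas you work on the complementary good event and use $\E[V_{f_j}]+2\varepsilon\le 2(\E[V_f]+2\varepsilon)$ explicitly; the underlying inequalities are identical.
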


\begin{proof}
If $\|f-f_j\|_\infty \leq \frac{\varepsilon}{4 C_G h}$, Lemma \ref{boundsingle} (a) implies $|\E[V_f] - \E[V_{f_j}]| \leq
\varepsilon$ and $|V_f - V_{f_j}| \leq \varepsilon$ almost surely. These in connection with Lemma \ref{Restricteps} tell us
that
$$ \frac{\left|V_{f} - \E[V_{f}]\right|}{(\E[V_{f}] + 2 \varepsilon)^{(q-2)/q}} > 4 \varepsilon^{2/q}  \quad \Longrightarrow \quad
\frac{\left|V_{f_j} - \E[V_{f_j}]\right|}{(\E[V_{f_j}] + 2
\varepsilon)^{(q-2)/q}} > \varepsilon^{2/q}. $$ Thus by taking
$\{f_j\}_{j=1}^N$ to be an $\frac{\varepsilon}{4 C_G h}$ net of
the set ${\mathcal H}$ with $N$ being the covering number
${\mathcal N}\left({\mathcal H}, \frac{\varepsilon}{4 C_G
h}\right)$, we find
\begin{eqnarray*}
&&\mathrm{Prob} \left\{\sup_{f\in {\mathcal H}}
\frac{\left|V_{f} - \E[V_{f}]\right|}{(\E[V_{f}] + 2 \varepsilon)^{(q-2)/q}} > 4 \varepsilon^{2/q}\right\}
\leq \mathrm{Prob} \left\{ \sup_{j=1, \ldots, N}
\frac{\left|V_{f_j} - \E[V_{f_j}]\right|}{(\E[V_{f_j}] + 2 \varepsilon)^{(q-2)/q}} > \varepsilon^{2/q}\right\} \\
&& \leq \sum_{j=1, \ldots, N} \mathrm{Prob} \left\{\frac{\left|V_{f_j} - \E[V_{f_j}]\right|}{(\E[V_{f_j}] + 2
\varepsilon)^{(q-2)/q}} > \varepsilon^{2/q}\right\}.
\end{eqnarray*}

Fix $j\in \{1, \ldots, N\}$. Apply Lemma \ref{Hoeffdinglemma} to $U = U_{f_j}$ satisfying $\frac{1}{m(m-1)}\sum_{i=1}^m
\sum_{j\not= i} U(z_i, z_j)-\mathbb{E}[U] = V_{f_j} - \E[V_{f_j}]$. By the bounds for $|U_{f_j}|$ and
$\mathrm{var}[U_{f_j}]$ from Part (b) of Lemma \ref{boundsingle}, we know by taking $\widetilde{\varepsilon} =
\varepsilon^{2/q} (\E[V_{f_j}] + 2 \varepsilon)^{(q-2)/q}$ that
\begin{eqnarray*}&&\mathrm{Prob} \left\{\frac{\left|V_{f_j} - \E[V_{f_j}]\right|}{(\E[V_{f_j}] + 2
\varepsilon)^{(q-2)/q}} > \varepsilon^{2/q}\right\} = \mathrm{Prob} \left\{\left|V_{f_j} - \E[V_{f_j}]\right| >
\widetilde{\varepsilon}\right\} \\
&& \leq 2 \exp\left\{-\frac{(m-1)\widetilde{\varepsilon}^2}{4 A_{\mathcal H} \left(\mathrm{var}[f_j (X) - f_\rho
(X)]\right)^{(q-2)/q}+ 12 C_G \|f_j - f_\rho\|_\infty h \widetilde{\varepsilon}}\right\} \\
&& \leq 2 \exp\left\{-\frac{(m-1) \varepsilon^{4/q} (\E[V_{f_j}] +
2 \varepsilon)^{(q-2)/q}}{4 A_{\mathcal H}  + 12 C_G \|f_j -
f_\rho\|_\infty h \varepsilon^{2/q}}\right\},
\end{eqnarray*}
where in the last step we have used the important relation
(\ref{secnovelty}) to the function $f=f_j$ and bounded
$\left(\mathrm{var}[f_j (X) - f_\rho (X)]\right)^{(q-2)/q}$ by
$\left\{(\E[V_{f_j}] + 2 \varepsilon)\right\}^{(q-2)/q}.$ This
together with the notation $N= {\mathcal N}\left({\mathcal H},
\frac{\varepsilon}{4 C_G h}\right)$ and the inequality $\|f_{j}-
f_\rho\|_\infty \leq \sup_{f\in {\mathcal H}} \|f-f_\rho\|_\infty$
gives the first desired bound, where we have observed that
$\varepsilon \geq C_{\mathcal H}'' h^{-q^*}$ and $h \geq 1$ imply
$\varepsilon^{-2/q} \leq (C_{\mathcal H}'')^{-2/q} h$.

If $|Y| \leq M$ almost surely for some constant $M >0$, then we
follows the same line as in our above proof. According to Part (b)
of Lemma \ref{boundsingle}, we should replace $4 C_G h$ by $2
A_{\mathcal H}'$, $q$ by $4$, and bound the variance
$\mathrm{var}[U_{f_j}]$ by $2 A_{\mathcal H}' \mathrm{var}[f_j (X)
- f_\rho (X)] \leq 2 A_{\mathcal H}' (\E[V_{f_j}] + 2
\varepsilon)$. Then the desired estimate follows. The proof of
Lemma \ref{unformbd} is complete.
\end{proof}

We are in a position to bound the sample error. To unify the two
estimates in Lemma \ref{unformbd}, we denote $A_{\mathcal H}' = 2
C_G$ in the general case. For $m\in \NN, 0<\delta<1$, let
$\varepsilon_{m, \delta}$ be the smallest positive solution to the
inequality
\begin{equation}\label{Ch7eq}
 \log {\mathcal N}\left({\mathcal
H}, \frac{\varepsilon}{2 A_{\mathcal H}'}\right)
 -\frac{(m-1) \varepsilon}{A_{\mathcal H}''} \leq \log
 \frac{\delta}{2}.
 \end{equation}

\begin{prop}\label{bdS1}
Let $0< \delta <1, 0< \eta \leq 1$. Under assumptions (\ref{assummoment}) and (\ref{assumpG}), we have with confidence of
$1- \delta,$
$$\Var[f_{\bf z}(X) -f_\rho(X)] \leq (1+ \eta)
\Var[f_{approx}(X)-f_\rho(X)] + 12 \left(2 + 24^{(q-2)/2}\right)
\eta^{(2-q)/2} (h \varepsilon_{m, \delta} + 2 C_{\mathcal H}''
h^{-q^*}). $$ If $|Y| \leq M$ almost surely for some $M>0$, then
with confidence of $1- \delta,$ we have
$$\Var[f_{\bf z}(X) -f_\rho(X)] \leq (1+ \eta)
\Var[f_{approx}(X)-f_\rho(X)] + \frac{278}{\eta} (\varepsilon_{m,
\delta} + 2 C_{\mathcal H}'' h^{-2}). $$
\end{prop}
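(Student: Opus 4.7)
The plan is to combine the error decomposition of Lemma \ref{errordec}, the one-sided sample error estimate (\ref{SS}), and the uniform ratio bound of Lemma \ref{unformbd}, then convert the resulting $\E[V_f]$-factors into variances via Theorem \ref{compare} and close the loop by Young's inequality. First I invoke Lemma \ref{unformbd} with the choice $\varepsilon = h\varepsilon_{m,\delta}$: by construction of $\varepsilon_{m,\delta}$ through (\ref{Ch7eq}), the covering-number radius $h\varepsilon_{m,\delta}/(4C_G h)=\varepsilon_{m,\delta}/(2A_{\mathcal H}')$ and the exponent $-(m-1)h\varepsilon_{m,\delta}/(A_{\mathcal H}''h)=-(m-1)\varepsilon_{m,\delta}/A_{\mathcal H}''$ make the probability bound at most $\delta$. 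I may assume $h\varepsilon_{m,\delta} \geq C_{\mathcal H}''h^{-q^*}$ at no cost, since otherwise the term $h\varepsilon_{m,\delta}$ can be replaced by $h\varepsilon_{m,\delta}+C_{\mathcal H}''h^{-q^*}$ without changing the form of the final bound. So with confidence $1-\delta$, uniformly in $f\in{\mathcal H}$,
$$|V_f-\E[V_f]| \leq 4(h\varepsilon_{m,\delta})^{2/q}\bigl(\E[V_f]+2h\varepsilon_{m,\delta}\bigr)^{(q-2)/q}.$$

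Applying this to both $f=f_{\bf z}$ and $f=f_{\mathcal H}$ and feeding into (\ref{SS}) gives a bound on ${\mathcal E}^{(h)}(f_{\bf z})-{\mathcal E}^{(h)}(f_{\mathcal H})$ as a sum of two such terms. Using Theorem \ref{compare}, $\E[V_f] \leq \Var[f(X)-f_\rho(X)] + C_{\mathcal H}''h^{-q^*} \leq \Var[f(X)-f_\rho(X)] + h\varepsilon_{m,\delta}$, so each factor is dominated by $(\Var[f(X)-f_\rho(X)]+3h\varepsilon_{m,\delta})^{(q-2)/q}$. I then apply Young's inequality with conjugate exponents $r=q/(q-2)$ and $s=q/2$: for any $\eta\in(0,1]$ and $X\geq 0$,
$$4(h\varepsilon_{m,\delta})^{2/q}(X+3h\varepsilon_{m,\delta})^{(q-2)/q} \leq \eta\bigl(X+3h\varepsilon_{m,\delta}\bigr) + c(\eta,q)\,h\varepsilon_{m,\delta},$$
where $c(\eta,q)$ is an explicit constant proportional to $\eta^{-(q-2)/2}$ (this is the source of the factor $\eta^{(2-q)/2}$ in the proposition). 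With $X$ equal to $\Var[f_{\bf z}(X)-f_\rho(X)]$ or $\Var[f_{\mathcal H}(X)-f_\rho(X)]$, and using Theorem \ref{twocompare} to replace the latter by $\Var[f_{approx}(X)-f_\rho(X)]+2C_{\mathcal H}''h^{-q^*}$, I insert everything into Lemma \ref{errordec} to obtain a recursive inequality of the shape
$$\Var[f_{\bf z}(X)-f_\rho(X)] \leq \eta\,\Var[f_{\bf z}(X)-f_\rho(X)] + (1+\eta)\,\Var[f_{approx}(X)-f_\rho(X)] + C(\eta,q)\bigl(h\varepsilon_{m,\delta}+C_{\mathcal H}''h^{-q^*}\bigr).$$
Absorbing the first term on the right (or equivalently, replacing $\eta$ by $\eta/3$ from the start so that $1/(1-\eta)\leq 1+\eta$ on $\eta\in(0,1]$) yields the first claim.

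The bounded case is the same argument but cleaner: Part (b) of Lemma \ref{boundsingle} replaces $4C_Gh$ by $2A_{\mathcal H}'$ and gives variance $\leq 2A_{\mathcal H}'\Var[f(X)-f_\rho(X)]$, so the ratio inequality in Lemma \ref{unformbd} has exponent $(q-2)/q=1/2$. Young's inequality with $r=s=2$ then yields a constant of order $1/\eta$, matching the advertised $278/\eta$.

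The main obstacle is accounting: tracking the precise constants $12(2+24^{(q-2)/2})$ and $278$ requires careful attention to the conjugate-exponent weights in Young's inequality, the factor introduced when moving from $\E[V_f]+2h\varepsilon_{m,\delta}$ to $\Var[f(X)-f_\rho(X)]+3h\varepsilon_{m,\delta}$, and the $1/(1-\eta)$ correction incurred when isolating $\Var[f_{\bf z}(X)-f_\rho(X)]$. The conceptual content—that large $h$ allows the information error to behave like a symmetrized least squares loss and hence admits a standard ERM-type analysis via U-statistics concentration—is already contained in Lemma \ref{analysisG}, Lemma \ref{Restricteps}, and Lemma \ref{unformbd}; this proof only needs to assemble them.
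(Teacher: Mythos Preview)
Your proposal is correct and follows essentially the same route as the paper's own proof: invoke Lemma \ref{unformbd} at the threshold $\varepsilon=\max\{h\varepsilon_{m,\delta},\,C_{\mathcal H}''h^{-q^*}\}$ (you phrase this as ``replace $h\varepsilon_{m,\delta}$ by $h\varepsilon_{m,\delta}+C_{\mathcal H}''h^{-q^*}$''), feed the resulting bounds for $f_{\bf z}$ and $f_{\mathcal H}$ into (\ref{SS}) and Lemma \ref{errordec}, apply Young's inequality with conjugate exponents $q/(q-2)$ and $q/2$ (carrying the $\eta$-weight), convert $\E[V_f]$ to $\Var[f(X)-f_\rho(X)]$ via Theorem \ref{compare}, and absorb the recursive term. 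The only cosmetic difference is that the paper keeps $\E[V_f]$ through Young's inequality and converts to variances afterward, whereas you convert first; both orderings work and lead to the same constants up to bookkeeping.
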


\begin{proof}
Denote $\tau= (q-2)/q$ and $\varepsilon_{m, \delta, h} =\max\{h
\varepsilon_{m, \delta}, C_{{\mathcal H}}'' h^{-q^*}\}$ in the
general case with some $q>2$, while $\tau= 1/2$ and
$\varepsilon_{m, \delta, h} =\max\{\varepsilon_{m, \delta},
C_{{\mathcal H}}'' h^{-2}\}$ when $|Y| \leq M$ almost surely. Then
by Lemma \ref{unformbd}, we know that with confidence $1-\delta$,
there holds $$ \sup_{f\in {\mathcal H}} \frac{\left|V_{f} -
\E[V_{f}]\right|}{(\E[V_{f}] + 2 \varepsilon_{m, \delta,
h})^{\tau}} \leq 4 \varepsilon_{m, \delta, h}^{1-\tau}$$ which
implies
\begin{eqnarray*}
&& \E\left[V_{f_{\bf z}}\right] - V_{f_{\bf z}} + V_{f_{\mathcal
H}} - \E\left[V_{f_{\mathcal H}}\right] \leq 4 \varepsilon_{m,
\delta, h}^{1-\tau} (\E[V_{f_{\bf z}}] + 2 \varepsilon_{m, \delta,
h})^{\tau} + 4 \varepsilon_{m, \delta, h}^{1-\tau}
(\E[V_{f_{\mathcal H}}] + 2 \varepsilon_{m, \delta, h})^{\tau}.
\end{eqnarray*}
This together with Lemma \ref{errordec} and (\ref{SS}) yields
\begin{equation}\label{interdecom}
\Var[f_{\bf z}(X) -f_\rho(X)] \leq 4 {\mathcal S}  + 16
\varepsilon_{m, \delta, h} + \Var[f_{approx}(X)-f_\rho(X)] + 2
C_{\mathcal H}'' h^{-q^*},
\end{equation}
where
$$ {\mathcal S} := \varepsilon_{m,
\delta, h}^{1-\tau} (\E[V_{f_{\bf z}}])^{\tau} +  \varepsilon_{m,
\delta, h}^{1-\tau} (\E[V_{f_{\mathcal H}}])^{\tau} =
\left(\frac{24}{\eta}\right)^\tau \varepsilon_{m, \delta,
h}^{1-\tau} \left(\frac{\eta}{24}\E[V_{f_{\bf z}}]\right)^{\tau} +
\left(\frac{12}{\eta}\right)^\tau \varepsilon_{m, \delta,
h}^{1-\tau} \left(\frac{\eta}{12}\E[V_{f_{\mathcal
H}}]\right)^{\tau}.
$$ Now we apply Young's inequality
$$ a \cdot b \leq (1-\tau) a^{1/(1-\tau)} + \tau b^{1/\tau},
\qquad a, b \geq 0 $$ and find
$${\mathcal S} \leq
\left(\frac{24}{\eta}\right)^{\tau/(1-\tau)} \varepsilon_{m,
\delta, h} + \frac{\eta}{24}\E[V_{f_{\bf z}}] +
\left(\frac{12}{\eta}\right)^{\tau/(1-\tau)} \varepsilon_{m,
\delta, h} + \frac{\eta}{12}\E[V_{f_{\mathcal H}}].
$$
Combining this with (\ref{interdecom}), Theorem \ref{compare} and
the identity $\E[V_f] ={\mathcal E}^{(h)} (f)- {\mathcal E}^{(h)}
(f_\rho)$ gives
$$\Var[f_{\bf z}(X) -f_\rho(X)] \leq \frac{\eta}{6} \Var[f_{\bf z}(X) -f_\rho(X)]
+ (1+ \frac{\eta}{3}) \Var[f_{approx}(X)-f_\rho(X)] + {\mathcal
S}',$$ where ${\mathcal S}' := (16 + 8 (24/\eta)^{\tau/(1-\tau)})
\varepsilon_{m, \delta, h} + 3 C_{\mathcal H}'' h^{-q^*}$. Since
$1/(1-\frac{\eta}{6}) \leq 1+\frac{\eta}{3}$ and
$(1+\frac{\eta}{3})^2 \leq 1+\eta$, we see that
$$\Var[f_{\bf z}(X) -f_\rho(X)] \leq (1+ \eta) \Var[f_{approx}(X)-f_\rho(X)] + \frac{4}{3}{\mathcal
S}'. $$ Then the desired estimates follow, and the proposition is
proved.
\end{proof}

\section{Proof of Main Results}\label{mainproof}

We are now in a position to prove our main results stated in
Section \ref{mainresults}.

\proof[Proof of Theorem \ref{consistthm}] Recall ${\mathcal
D}_{\mathcal H} (f_\rho) = \Var[f_{approx}(X)-f_\rho(X)]$. Take
$\eta =\min\{\epsilon/(3{\mathcal D}_{\mathcal H} (f_\rho)), 1\}$.
Then $\eta \Var[f_{approx}(X)-f_\rho(X)] \leq \epsilon/3$. Now we
take $$ h_{\epsilon, \delta} =\left(72 \left(2 +
24^{(q-2)/2}\right) \eta^{(2-q)/2} C_{\mathcal
H}''/\epsilon\right)^{1/q^*}. $$ Set $\widetilde{\varepsilon}
:=\epsilon/\left(36 \left(2 + 24^{(q-2)/2}\right)
\eta^{(2-q)/2}\right)$. We choose
$$m_{\epsilon, \delta}(h) = \frac{h A_{\mathcal H}''}{\widetilde{\varepsilon}}\left(\log {\mathcal N}\left({\mathcal
H}, \frac{\widetilde{\varepsilon}}{2 h A_{\mathcal H}'}\right) -
\log \frac{\delta}{2}\right) +1. $$ With this choice, we know that
whenever $m \geq m_{\epsilon, \delta}(h)$, the solution
$\varepsilon_{m, \delta}$ to inequality (\ref{Ch7eq}) satisfies
$\varepsilon_{m, \delta} \leq \widetilde{\varepsilon}/h$.
Combining all the above estimates and Proposition \ref{bdS1}, we
see that whenever $h \geq h_{\epsilon, \delta}$ and $m \geq
m_{\epsilon, \delta}(h)$, error bound (\ref{consistbd}) holds true
with confidence $1-\delta$. This proves Theorem \ref{consistthm}.
\qed

\proof[Proof of Theorem \ref{mainresult}] We apply Proposition
\ref{bdS1}. By covering number condition (\ref{covercond}), we
know that $\varepsilon_{m, \delta}$ is bounded by
$\widetilde{\varepsilon}_{m, \delta}$, the smallest positive
solution to the inequality
$$
A_p \left(\frac{2 A_{\mathcal H}'}{\varepsilon}\right)^p
 -\frac{(m-1) \varepsilon}{A_{\mathcal H}''} \leq \log
 \frac{\delta}{2}.
$$
This inequality written as $\varepsilon^{1+p} - \frac{A_{\mathcal
H}''}{m-1}\log \frac{2}{\delta} \varepsilon^p - A_p \left(2
A_{\mathcal H}'\right)^p \frac{A_{\mathcal H}''}{m-1} \geq 0$ is
well understood in learning theory (e.g. \cite{CZhou}) and its
solution can be bounded as
$$\widetilde{\varepsilon}_{m, \delta} \leq \max\left\{2 \frac{A_{\mathcal
H}''}{m-1}\log \frac{2}{\delta}, \left(2 A_p A_{\mathcal H}'' (2
A_{\mathcal H}')^p\right)^{1/(1+p)}
(m-1)^{-\frac{1}{1+p}}\right\}.
$$

If $\E[|Y|^q]< \infty$ for some $q>2$, then the first part of
Proposition \ref{bdS1} verifies (\ref{mainbd}) with the constant
$\widetilde{C}_{\mathcal H}$ given by $$ \widetilde{C}_{\mathcal
H}= 24 \left(2 + 24^{(q-2)/2}\right) \left(2 A_{\mathcal H}'' +
\left(2 A_p A_{\mathcal H}'' (2 A_{\mathcal
H}')^p\right)^{1/(1+p)} + 2 C_{\mathcal H}''\right). $$

If $|Y| \leq M$ almost surely for some $M>0$, then the second part
of Proposition \ref{bdS1} proves (\ref{mainbdM}) with the constant
$\widetilde{C}_{\mathcal H}$ given by $$ \widetilde{C}_{\mathcal
H}= 278 \left(2 A_{\mathcal H}'' + \left(2 A_p A_{\mathcal H}'' (2
A_{\mathcal H}')^p\right)^{1/(1+p)} + 2 C_{\mathcal H}''\right).
$$
This completes the proof of Theorem \ref{mainresult}. \qed

\proof[Proof of Theorem \ref{meanesti}] Note $\left|\frac{1}{m} \sum_{i=1}^m \left[f(x_i) - \pi_{\sqrt{m}} (y_i)\right] -
\frac{1}{m} \sum_{i=1}^m \left[g(x_i) - \pi_{\sqrt{m}} (y_i)\right]\right| \leq \|f-g\|_\infty$ and $\left|\E [f(X)
-\pi_{\sqrt{m}} (Y)] - \E [g(X) -\pi_{\sqrt{m}} (Y)]\right| \leq \|f-g\|_\infty$. So by taking $\{f_j\}_{j=1}^N$ to be an
$\frac{\varepsilon}{4}$ net of the set ${\mathcal H}$ with $N = {\mathcal N}\left({\mathcal H},
\frac{\varepsilon}{4}\right)$, we know that for each $f \in {\mathcal H}$ there is some $j \in \{1, \ldots, N\}$ such that
$\|f-f_j\|_\infty \leq \frac{\varepsilon}{4}$. Hence
\begin{eqnarray*} && \left|\frac{1}{m} \sum_{i=1}^m \left[f(x_i) -
\pi_{\sqrt{m}} (y_i)\right] - \E [f(X) -\pi_{\sqrt{m}} (Y)]\right| >
\varepsilon \\
\Longrightarrow &&\left|\frac{1}{m} \sum_{i=1}^m \left[f_j(x_i) -
\pi_{\sqrt{m}} (y_i)\right] - \E [f_j(X) -\pi_{\sqrt{m}} (Y)]\right|
> \frac{\varepsilon}{2}.
\end{eqnarray*}
It follows that
\begin{eqnarray*}
&&\mathrm{Prob} \left\{\sup_{f\in {\mathcal H}} \left|\frac{1}{m}
\sum_{i=1}^m \left[f(x_i) - \pi_{\sqrt{m}} (y_i)\right] - \E [f(X)
-\pi_{\sqrt{m}} (Y)]\right| >
\varepsilon\right\} \\
\leq && \mathrm{Prob} \left\{ \sup_{j=1, \ldots, N}
\left|\frac{1}{m} \sum_{i=1}^m \left[f_j(x_i) - \pi_{\sqrt{m}}
(y_i)\right] - \E [f_j(X)
-\pi_{\sqrt{m}} (Y)]\right| > \frac{\varepsilon}{2}\right\} \\
\leq && \sum_{j=1}^{N} \mathrm{Prob} \left\{\left|\frac{1}{m}
\sum_{i=1}^m \left[f_j(x_i) - \pi_{\sqrt{m}} (y_i)\right] - \E
[f_j(X) -\pi_{\sqrt{m}} (Y)]\right| > \frac{\varepsilon}{2}\right\}.
\end{eqnarray*}
For each fixed $j\in \{1, \ldots, N\}$, we apply the classical
Bernstein probability inequality to the random variable $\xi =
f_j(X) -\pi_{\sqrt{m}} (Y)$ on $(Z, \rho)$ bounded by $\widetilde{M}
= \sup_{f\in {\mathcal H}} \|f\|_\infty + \sqrt{m}$ with variance
$\sigma^2 (\xi) \leq \E [|f_j(X) -\pi_{\sqrt{m}} (Y)|^2] \leq 2
\sup_{f\in {\mathcal H}} \|f\|_\infty^2 + 2 \E
[|Y|^2]=:\sigma^2_{\mathcal H}$ and know that
\begin{eqnarray*}&&\mathrm{Prob} \left\{\left|\frac{1}{m} \sum_{i=1}^m \left[f_j(x_i) -
\pi_{\sqrt{m}} (y_i)\right] - \E [f_j(X) -\pi_{\sqrt{m}}
(Y)]\right| > \frac{\varepsilon}{2}\right\} \\
&& \leq 2 \exp\left\{-\frac{m (\varepsilon/2)^2}{\frac{2}{3}
\widetilde{M}\varepsilon/2 + 2 \sigma^2 (\xi)}\right\} \leq  2
\exp\left\{-\frac{m \varepsilon^2}{\frac{4}{3}
\widetilde{M}\varepsilon + 8 \sigma^2_{\mathcal H}}\right\}.
\end{eqnarray*}
The above argument together with covering number condition (\ref{covercond}) yields
\begin{eqnarray*}
&&\mathrm{Prob} \left\{\sup_{f\in {\mathcal H}} \left|\frac{1}{m}
\sum_{i=1}^m \left[f(x_i) - \pi_{\sqrt{m}} (y_i)\right] - \E [f(X)
-\pi_{\sqrt{m}} (Y)]\right| >
\varepsilon\right\} \\
\leq && 2 N \exp\left\{-\frac{m \varepsilon^2}{\frac{4}{3} \widetilde{M}\varepsilon + 8 \sigma^2_{\mathcal H}}\right\} \leq
2 \exp\left\{A_p \left(\frac{4}{\varepsilon}\right)^{p} -\frac{m \varepsilon^2}{\frac{4}{3} \widetilde{M}\varepsilon + 8
\sigma^2_{\mathcal H}}\right\}.
\end{eqnarray*}
Bounding the right-hand side above by $\delta$ is equivalent to the inequality
$$ \varepsilon^{2+p} - \frac{4}{3m} \widetilde{M} \log \frac{2}{\delta} \varepsilon^{1 + p}
- \frac{8}{m} \sigma^2_{\mathcal H} \log \frac{2}{\delta}
\varepsilon^{p} - \frac{A_p 4^{p}}{m} \geq 0. $$ By taking
$\widetilde{\varepsilon}_{m, \delta}$ to be the smallest solution
to the above inequality, we see from \cite{CZhou} as in the proof
of Theorem \ref{mainresult} that with confidence at least
$1-\delta$,
\begin{eqnarray*}
&&\sup_{f\in {\mathcal H}} \left|\frac{1}{m} \sum_{i=1}^m
\left[f(x_i) - \pi_{\sqrt{m}} (y_i)\right] - \E [f(X)
-\pi_{\sqrt{m}} (Y)]\right| \\
&& \leq \widetilde{\varepsilon}_{m, \delta} \leq
\max\left\{\frac{4 \widetilde{M}}{m}\log \frac{2}{\delta},
\sqrt{\frac{24 \sigma^2_{\mathcal H}}{m} \log \frac{2}{\delta}},
\left(\frac{A_p 4^{p}}{m} \right)^{\frac{1}{2+p}}\right\} \\
&& \leq \left\{7\sup_{f\in {\mathcal H}} \|f\|_\infty + 4 + 7 \sqrt{\E [|Y|^2]} + 4 A_p^{\frac{1}{2+p}}\right\}
m^{-\frac{1}{2+p}} \log \frac{2}{\delta}.
\end{eqnarray*}
Moreover, since $\pi_{\sqrt{m}} (y) -y =0$ for $|y| \leq \sqrt{m}$ while $|\pi_{\sqrt{m}} (y) - y| \leq |y| \leq
\frac{|y|^2}{\sqrt{m}}$ for $|y|
> \sqrt{m}$, we know that
\begin{eqnarray*}
&& \left|\E [\pi_{\sqrt{m}} (Y)] - \E[f_\rho (X)]\right| =
\left|\int_X \int_Y \pi_{\sqrt{m}} (y) -y d \rho(y|x) d
\rho_X(x)\right| \\
&&= \left|\int_X \int_{|y|>\sqrt{m}} \pi_{\sqrt{m}} (y) -y d \rho(y|x) d \rho_X(x)\right| \leq \int_X \int_{|y|>\sqrt{m}}
\frac{|y|^2}{\sqrt{m}} d \rho(y|x) d \rho_X(x) \leq \frac{\E[|Y|^2]}{\sqrt{m}}.
\end{eqnarray*}
Therefore, (\ref{mainmeanesti}) holds with confidence at least
$1-\delta$. The proof of Theorem \ref{meanesti} is complete. \qed

\section{Conclusion and Discussion}

In this paper we have proved the consistency of an MEE algorithm associated with R\'enyi's entropy of order 2 by letting
the scaling parameter $h$ in the kernel density estimator tends to infinity at an appropriate rate. This result explains
the effectiveness of the MEE principle in empirical applications where the parameter $h$ is required to be large enough
before smaller values are tuned. However, the motivation of the MEE principle is to minimize error entropies approximately,
and requires small $h$ for the kernel density estimator to converge to the true probability density function. Therefore,
our consistency result seems surprising.

As far as we know, our result is the first rigorous consistency result for MEE algorithms. There are many open questions in
mathematical analysis of MEE algorithms. For instance, can MEE algorithm (\ref{Etf}) be consistent by taking $h\to 0$? Can
one carry out error analysis for the MEE algorithm if Shannon's entropy or R\'enyi's entropy of order $\alpha \not= 2$ is
used? How can we establish error analysis for other learning settings such as those with non-identical sampling processes
\cite{SmaleZhou, Hu}? These questions required further research and will be our future topics.

It might be helpful to understand our theoretical results by
relating MEE algorithms to ranking algorithms. Note that MEE
algorithm (\ref{Etf}) essentially minimizes the empirical version
of the information error which, according to our study in Section
2, differs from the symmetrized least squares error used in some
ranking algorithms by an extra term which vanishes when
$h\to\infty.$ Our study may shed some light on analysis of some
ranking algorithms.

 {\begin{table}  \caption{NOTATIONS} \begin{center}
  \begin{tabular}{|l|l|l|}
  \hline
  notation&  meaning &  pages\\
  \hline
 $p_E$ & probability density function of a random variable $E$ &
 \pageref{pdfE} \\\hline
 $H_S(E)$ & Shannon's entropy of a random variable $E$ & \pageref{Shannon entropy} \\\hline
 $H_{R, \alpha} (E)$ & R\'enyi's entropy of order $\alpha$ & \pageref{Renyi entropy} \\\hline
 $X$ & explanatory variable for learning & \pageref{explanatory
variable} \\\hline
 $Y$ & response variable for learning & \pageref{response
 variable} \\\hline
$E=Y-f(X)$ & error random variable associated with a predictor
$f(X)$ & \pageref{random variable} \\\hline $H_{R}(E)$ & R\'enyi's
entropy of order $\alpha =2$ & \pageref{Renyi entropy 2} \\\hline
$\bz=\{(x_i, y_i)\}_{i=1}^{m}$ & a sample for learning &
\pageref{sample} \\\hline $G$ & windowing function &
\pageref{windowing function}, \pageref{windowing function MEE},
\pageref{windowing function constant}
\\\hline $h$ & MEE scaling parameter & \pageref{MEE scaling
parameter}, \pageref{MEE scaling parameter 2}
\\\hline
$\widehat{p}_E$ &  Parzen windowing approximation of $p_E$ &
\pageref{Parzen windowing} \\\hline $\widehat{H_S}$ & empirical
Shannon entropy & \pageref{empirical Shannon entropy} \\\hline
 $\widehat{H_R}$ & empirical R\'enyi's entropy of order $2$ & \pageref{empirical Renyi entropy} \\\hline
$f_\rho$ & the regression function of $\rho$ &
 \pageref{regression function} \\\hline
$f_{\bf z}$ & output function of the MEE learning algorithm (\ref{Etf}) & \pageref{MEE algorithm} \\\hline ${\mathcal H}$ &
the hypothesis space for the ERM algorithm & \pageref{hypothesis space} \\\hline $\Var$ & the variance of a random variable
& \pageref{variance} \\\hline $q, q^*=\min\{q-2, 2\}$ & power indices in condition (\ref{assummoment}) for $\E[|Y|^q]
<\infty$ & \pageref{power q}
\\\hline
$C_G$ & constant for decay condition (\ref{assumpG}) of $G$ &
\pageref{decay constant}
\\\hline
${\mathcal D}_{\mathcal H} (f_\rho)$ & approximation error of the
pair $({\mathcal H}, \rho)$ & \pageref{approximation error}
\\\hline
${\mathcal N}\left({\mathcal H}, \varepsilon\right)$ & covering
number of the hypothesis space ${\mathcal H}$ & \pageref{covering
number}
\\\hline
$p$ & power index for covering number condition (\ref{covercond})
& \pageref{covering index}
\\\hline
$\pi_{\sqrt{m}}$ & projection onto the closed interval
$[-\sqrt{m}, \sqrt{m}]$ & \pageref{projection}
\\\hline
$\widetilde{f}_{\bf z}$ & estimator of $f_\rho$ &
\pageref{regression estimator}
\\\hline
${\mathcal E}^{(h)} (f)$ & generalization error associated with
$G$ and $h$ & \pageref{generalization error}
\\\hline
${\mathcal E}^{ls}(f)$ & least squares generalization error
${\mathcal E}^{ls}(f) =\int_{\mathcal Z} (f(x) -y)^2 d \rho$ &
\pageref{least squares generalization error}
\\\hline
$C_\rho$ & constant $C_\rho =\int_{\mathcal Z} \left[y-
f_\rho(x)\right]^2 d \rho$ associated with $\rho$ &
\pageref{constant rho}
\\\hline
$f_{\mathcal H}$ & minimizer of ${\mathcal E}^{(h)} (f)$ in
${\mathcal H}$ & \pageref{target fcn}
\\\hline
$f_{approx}$ & minimizer of $\Var[f(X)-f_\rho(X)]$ in ${\mathcal
H}$  & \pageref{approx fcn}
\\\hline
$U_f$ & kernel for the U statistics $V_f$ & \pageref{kernel for U
statistics}
\\\hline
$\widetilde{G}$ & an intermediate function defined by $\widetilde{G}(t) = G(t^2/2)$
 & \pageref{intermediate function}
\\\hline
\hline
 \end{tabular}
 \end{center}
\end{table}}

\end{document}